\def\eqref#1{equation~\ref{#1}}
\def\1{\bm{1}}
\def\vpi{{\bm{\pi}}}
\def\vb{{\bm{b}}}
\def\ve{{\bm{e}}}
\def\vh{{\bm{h}}}
\def\vp{{\bm{p}}}
\def\vq{{\bm{q}}}
\def\vx{{\bm{x}}}
\DeclareMathAlphabet{\mathsfit}{\encodingdefault}{\sfdefault}{m}{sl}
\SetMathAlphabet{\mathsfit}{bold}{\encodingdefault}{\sfdefault}{bx}{n}
\def\gA{{\mathcal{A}}}
\def\gC{{\mathcal{C}}}
\def\gD{{\mathcal{D}}}
\def\gL{{\mathcal{L}}}
\def\gP{{\mathcal{P}}}
\def\gQ{{\mathcal{Q}}}
\def\gS{{\mathcal{S}}}
\def\gX{{\mathcal{X}}}
\def\gY{{\mathcal{Y}}}
\def\sQ{{\mathbb{Q}}}
\def\sR{{\mathbb{R}}}
\newcommand{\E}{\mathbb{E}}
\DeclareMathOperator*{\argmin}{arg\,min}
\definecolor{colorblind_green}{HTML}{59a14f}
\definecolor{colorblind_red}{HTML}{e15759}
\newcommand{\Dcal}{\gD_\mathrm{cal}}
\newcommand{\quantile}[2]{\sQ\left(#1;#2\right)}
\newcommand{\convex}{\mathrm{Convex}}
\renewcommand{\Pr}{\operatorname{P}}
\theoremstyle{plain}
\theoremstyle{plain}
\newtheorem{theorem}{Theorem}[section]
\newaliascnt{proposition}{theorem}
\newtheorem{proposition}[proposition]{Proposition}
\crefname{proposition}{Proposition}{Propositions}
\newaliascnt{lemma}{theorem}
\crefname{lemma}{Lemma}{Lemmas}
\newaliascnt{corollary}{theorem}
\crefname{corollary}{Corollary}{Corollaries}
\theoremstyle{definition}
\newaliascnt{definition}{theorem}
\newtheorem{definition}[definition]{Definition}
\crefname{definition}{Definition}{Definitions}
\newaliascnt{assumption}{theorem}
\crefname{assumption}{Assumption}{Assumptions}
\theoremstyle{remark}
\newaliascnt{remark}{theorem}
\crefname{remark}{Remark}{Remarks}
\newtheorem{desideratum}{Desideratum}
\crefname{desideratum}{Desideratum}{Desiderata}
\title{Optimal Conformal Prediction under Epistemic Uncertainty}
\renewcommand\AB@affilsepx{\quad\protect\Affilfont}
\author[1,2]{\href{mailto:<alireza.javanmardi@ifi.lmu.de>?Subject=Your UAI 2026 paper}{Alireza Javanmardi}{}}
\author[3]{Soroush H. Zargarbashi}
\author[1,2]{Santo M. A. R. Thies}
\author[4]{Willem Waegeman}
\author[5]{Aleksandar Bojchevski}
\author[1,2,6]{Eyke H\"ullermeier}
\affil[1]{LMU Munich}
\affil[2]{MCML}
\affil[3]{CISPA}
\affil[4]{Ghent University}
\affil[5]{University of Cologne}
\affil[6]{DFKI}
  \gdef\AB@authlist{%
    \protect\Authfont \href{mailto:alireza.javanmardi@ifi.lmu.de?Subject=Your UAI 2026 paper}{Alireza Javanmardi}\textsuperscript{\normalfont 1,2}%
    \Authsep\protect\Authfont Soroush H. Zargarbashi\textsuperscript{\normalfont 3}%
    \Authsep\protect\Authfont Santo M. A. R. Thies\textsuperscript{\normalfont 1,2}%
    \\\protect\Authfont Willem Waegeman\textsuperscript{\normalfont 4}%
    \Authsep\protect\Authfont Aleksandar Bojchevski\textsuperscript{\normalfont 5}%
    \Authsep\protect\Authfont Eyke H\"ullermeier\textsuperscript{\normalfont 1,2,6}%
  }%
\begin{document}
\maketitle
\begin{abstract}  
Conformal prediction (CP) is a widely used frequentist framework to quantify uncertainty by constructing prediction sets with user-specified marginal coverage guarantees. In practice, CP is typically applied on top of probabilistic classifiers, which are able to express aleatoric but not epistemic uncertainty. In this paper, we consider the question of how to optimally employ CP on top of a more expressive formalism, namely credal sets, which can express both aleatoric and epistemic uncertainty. More specifically, we propose probabilistic Bernoulli prediction sets (BPS) and derive a variant that achieves conditional coverage for valid credal sets while remaining minimal in expected size. We then address the more realistic scenario in which the validity of the credal sets is not guaranteed. Assuming access to calibration data with ground-truth distributions over labels, we apply conformal risk control to BPS and derive a PAC-style guarantee: with high probability over the data, the achieved conditional coverage is at least the desired level. We validate our theoretical findings empirically over various datasets.
\end{abstract}


\section{Introduction}
Modern neural networks are widely deployed in safety-critical applications, where reliable uncertainty quantification is essential. In practice, uncertainty is typically modeled probabilistically, for instance, using a probabilistic classifier (e.g., a neural network with a softmax output), which predicts a label distribution for each input.
Not only are such predictions often miscalibrated \citep{guo2017calibration}, but they also fail to distinguish between two distinct sources of uncertainty, commonly known as \emph{aleatoric} and \emph{epistemic} uncertainty \citep{hora1996aleatory, hullermeier2021aleatoric}. Aleatoric uncertainty arises from inherent randomness in the data and is irreducible, whereas epistemic uncertainty reflects the learner's lack of knowledge and can, in principle, be reduced.
More expressive representations of uncertainty that are able to capture both aleatoric and epistemic uncertainty exist in two main forms: credal sets, which are convex sets of plausible probability distributions over labels \citep{walley1991statistical, zaffalon2002naive, caprio2024credal, caprio2024credalbayesian, wang2024credal, lohr2025credal, nguyen2025credal, hofman2026efficient}, and second-order distributions, which are distributions over label distributions \citep{neal2012bayesian, blundell2015weight, daxberger2021laplace, sensoy2018evidential}. Despite their appealing properties, such representations lack a formal notion of reliability in the uncertainties they express.

Formal reliability guarantees are provided by conformal prediction (CP) \citep{vovk2022algorithmic}, a framework for uncertainty representation based on set-valued predictions: rather than returning a single label, CP outputs a set of plausible labels. Such prediction sets are practical objects for risk-averse decision-making, as highlighted by \citet{pmlr-v267-kiyani25a}.
Importantly, CP guarantees marginal coverage, that is, on average, over a random draw of the data, the true label is contained in the predicted set with the user-specified probability.
This guarantee is distribution-free, holds in finite samples, and is agnostic to the underlying predictive model, requiring only a set of holdout calibration data that are exchangeable with the future test data. 
Conformal risk control (CRC) \citep{angelopoulos2022conformal} proposes similar distribution-free guarantees to control an arbitrary user-defined risk function\,---\,it ensures that risk is upper bounded for the future test point. CP is a particular case of CRC with set-valued prediction and miscoverage risk.

In practice, CP is most commonly applied on top of probabilistic classifiers and has demonstrated strong empirical performance across a wide range of classification tasks \citep{sadinle2019least, romano2020classification, angelopoulos2020uncertainty}. The core component of CP is the nonconformity score, a function that quantifies the (im)plausibility of assigning a candidate label to a given input. When combined with probabilistic classifiers, this score is typically defined as a function of the predicted label distribution, e.g., the negative predicted probability (softmax) of the label \citep{sadinle2019least}.
Among various approaches, \citet{romano2020classification} introduced the Adaptive Prediction Sets (APS) method, which, given access to the oracle label distributions, returns the smallest possible prediction sets that achieve conditional coverage, i.e., that contain the true label with a user-specified probability for each input.

In this paper, we are interested in constructing reliable prediction sets \`a la CP when uncertainty is represented by a credal set predictor rather than by a probabilistic classifier. One can look at this problem from two different perspectives. From the point of view of epistemic uncertainty representation, the question is how to ``conformalize'' a credal set predictor so as to deliver reliable set predictions. From the point of view of conformal prediction, the question is how to benefit from the additional information about epistemic uncertainty provided by a credal set predictor compared to a probabilistic classifier (see \cref{sec:related_work} for a discussion of related approaches at this intersection). 

As one important benefit, let us highlight guaranteed conditional coverage. Imagine, for example, a credal set predictor is able to guarantee that the conditional probability of label $A$ given query instance $\vx$ is between 0.6 and 0.7, the probability of label $B$ is between 0.3 and 0.4, and the probability of $C$ is between 0 and 0.1. The learner can then be sure that the prediction set $ \{A, B \}$ has a conditional coverage of (at least) 0.9, i.e., that $\{A, B \}$ is a valid prediction if the (user-specified) error rate is 0.1. This guarantee could not be given on the basis of any first-order representation, e.g., a representative label distribution of the credal set. 

We approach the problem step by step, progressing from simpler to more difficult variants.
We begin with the setting of valid credal sets, where the true label distribution is guaranteed to lie within the credal set for each input (like in the previous example). In this setting, we introduce \underline{B}ernoulli \underline{P}rediction \underline{S}ets (BPS), which take a credal set as input and return the smallest prediction set satisfying conditional coverage at any given desired level. As the assumption of valid credal predictions is unrealistic in practice, we first consider the problem of optimal set prediction under a suitably relaxed assumption of partially valid credal sets, and finally, in a scenario where no validity properties can be guaranteed at all. In these latter two cases, we obtain a probably approximately correct (PAC)-style conditional coverage guarantee: with high probability, the achieved conditional coverage is at least the desired level.

We evaluate our proposed methods on multiple datasets under settings with valid, partially valid, and unknown-validity credal sets, demonstrating the practical relevance of our theoretical results. 
\section{Background}
\label{sec: Background}
We assume that inputs $ \vx_i $ are sampled i.i.d.\ from a distribution $ \gP_{\gX} $. Conditional on $ \vx_i $, the label $ y_i $ is drawn from the categorical distribution $ p(\cdot \mid \vx_i) \in \triangle^{K} $, where $ K $ is the number of classes and $ \triangle^{K} $ is the $(K-1)$-dimensional probability simplex. In this setting, we write $ \vp_i := p(\cdot \mid \vx_i) $ and refer to it as the oracle label distribution at $ \vx_i $ (which is uniquely determined by $ \vx_i $). Even if the true distribution $ \vp_i $ were known, the prediction of the label $ y_i $ is subject to aleatoric uncertainty due to the inherent randomness in sampling $ y_i \sim \vp_i $.
For a given input $ \vx_i $, a probabilistic classifier outputs a single probability distribution over labels, denoted by $ \vpi_i = \pi(\cdot \mid \vx_i) \in \triangle^K $, which serves as an estimate of $ \vp_i $. While such an estimate can represent aleatoric uncertainty, it cannot capture the discrepancy between $ \vpi_i $ and $ \vp_i $, which corresponds to epistemic uncertainty. A credal set predictor captures this discrepancy by outputting a convex set of plausible probability distributions over labels, denoted by
$ \gQ_i := \gQ(\vx_i) \subseteq \triangle^K $.
Roughly speaking, the larger the credal set, the greater the learner’s uncertainty about the true distribution $ \vp_i $.

In the standard classification setting, one typically observes \textit{zero-order} data consisting of pairs $(\vx_i, y_i)$, where each input $ \vx_i $ is paired with a single realized label $ y_i $. In a richer setting, one may instead have access to \textit{first-order} data consisting of pairs $(\vx_i, \vp_i)$, where $ \vx_i $ is paired with the true label distribution $ \vp_i $. Such first-order data can arise in practice, for example, by aggregating multiple annotations per instance into a label distribution \citep{peterson2019human, nie2020what, obuchowicz2020qualityMRI, schmarje2022benchmark}, and are therefore becoming relevant in a growing range of applications, including conformal prediction \citep{stutz2023conformal,javanmardi2024conformalized, caprio2025conformalized}.

\paragraph{Conformal prediction and risk control.}
Let $\Dcal^\mathrm{zero} = \{(\vx_i, y_i)\}_{i = 1}^n$ be a holdout zero-order calibration set exchangeable with the future test point $(\vx_{n+1}, y_{n+1})$. For any nonconformity score function $s:\gX \times \gY \to \sR$ (capturing the disagreement between $\vx$ and $y$), and any user-specified coverage rate $1 - \alpha$, conformal prediction \citep{vovk2022algorithmic} constructs sets $\gC(\vx_{n+1})$ as follows. Let $\quantile{1 - \alpha}{\gA}$ denote the $(1 - \alpha)\cdot(1 + \frac{1}{n})$ quantile of the set $\gA$ and let $q := \quantile{1 - \alpha}{\{s(\vx_i, y_i): (\vx_i, y_i) \in \Dcal^\mathrm{zero}\}}$. Then, 
\begin{align}
\label{eq:conformal:set}
\gC(\vx_{n+1}) = \{ y: s(\vx_{n+1},y) \leq q\} \, .
\end{align}
In the common classification setting, the nonconformity score is often defined in terms of the predicted label distribution of a probabilistic classifier; for example, $s(\vx, y) = -\pi(y \mid \vx)$ \citep{sadinle2019least}.
CP guarantees
\begin{align}\label{eq:conformal:marginal}
\Pr_{\gD_+}[y_{n+1} \in \gC(\vx_{n+1})] \ge 1 - \alpha,
\end{align}
with $ \gD_+ := \Dcal^\mathrm{zero} \cup \{(\vx_{n+1}, y_{n+1}) \}$. Moreover, when the scores have a continuous joint distribution, and there are no ties, the coverage probability is upper bounded by $1 - \alpha + \frac{1}{n+1}$ \citep{lei2018distribution}. This guarantee holds only marginally, i.e., on average over a random draw of calibration and test data. In contrast, conditional coverage requires the same coverage probability at every test point, i.e.,
\begin{align}\label{eq:conformal:conditional}
\Pr \big[ y_{n+1} \in \gC(\vx_{n+1}) \mid \vx_{n+1} \big] \ge 1 - \alpha.
\end{align}
Generalizing CP, one can provide a similar guarantee for arbitrary risk functions through conformal risk control (CRC) \citep{angelopoulos2022conformal}. Consider a family of set predictors ${\gC_\lambda}$ indexed by a parameter $\lambda$, and let $\gL_i(\lambda) := \gL(\gC_\lambda(\vx_i), y_i)$ denote the risk induced by the set $\gC_\lambda(\vx_i)$. For exchangeable $\gD_+$, if $\gL(\lambda)$ is non-increasing in $\lambda$, bounded above (i.e., $\gL(\lambda) \in (-\infty, b]$), and right continuous in $\lambda$, \citet{angelopoulos2022conformal} show:
\begin{align}\label{eq:conformal-risk-control:guarantee}
\E_{\gD_+}[\gL_{n+1}(\lambda^\star)] \le \alpha  
\end{align}
for 
$$
\quad \lambda^\star := \inf\Big\{\lambda: \frac{1}{n+1} \sum_{i = 1}^{n}\gL_i(\lambda) + \frac{b}{n+1} \le \alpha \Big\} \, .
$$ 
Intuitively, $\lambda$ is a conservativeness parameter: increasing $\lambda$ typically produces larger (more conservative) prediction sets and therefore reduces the risk. CRC selects the smallest $\lambda$ that ensures the expected risk remains below the predefined tolerance level $\alpha$.
Note that conformal prediction itself is a special case of CRC with $\gC_\lambda(\vx_i) = \{y: s(\vx_i, y)\leq\lambda \}$, where the risk is defined as the miscoverage; that is, $\gL_i(\lambda) = \mathds{1}[y_i \notin \gC_\lambda(\vx_i)] = \mathds{1}[s(\vx_i, y_i) > \lambda]$.

\paragraph{Adaptive prediction sets.}
CP provides a marginal guarantee, meaning it is possible to have lower coverage in some regions of $\gX$ and higher coverage in others. For instance, with $s(\vx, y) = -\pi(y \mid \vx)$, the coverage probability can be biased toward easy examples (i.e., cases with highly concentrated predicted distributions). To address this, Adaptive Prediction Sets (APS) \citep{romano2020classification} use a score function that aims to better approximate conditional coverage. Formally, the APS score is defined as $s(\vx, y) := \rho(\vx, y) + u \cdot \pi(y \mid \vx)$, where $\rho(\vx, y) := \sum_{y' \in \mathcal{Y}} \pi(y' \mid \vx) \mathds{1}[\pi(y' \mid \vx) > \pi(y \mid \vx)]$ is the cumulative probability of classes ranked above $y$, and $u \sim \mathrm{Uniform}[0,1]$ is a tie-breaking random variable used to achieve exact $1-\alpha$ coverage.\footnote{Throughout the paper, ``$\cdot$'' between two vectors denotes the inner product; between scalars, it denotes ordinary multiplication.} 
Given access to the oracle label distributions, APS produces the smallest prediction sets satisfying conditional coverage \citep{angelopoulos2024theoretical}. Notably, even without such oracle access, APS tends to distribute coverage more evenly across test inputs than other CP baselines.

\section{Bernoulli prediction sets (BPS)}\label{sec:BPS}
While in CP, prediction sets are typically constructed by thresholding a score function (see (\ref{eq:conformal:set})), we develop our approach within a more general framework of randomized prediction sets. In this section, we formalize randomized, parameterized prediction sets in full generality, and in \cref{sec:Set Prediction for Credal Sets} we focus on how to derive such prediction sets from a credal set predictor in an optimal manner.

For each input $\vx_i$, a Bernoulli prediction set is specified by an \emph{inclusion probability vector} $\vb_i \in [0,1]^K$, where $b_{ik}$ denotes the probability that class $k$ is included in the predicted set. The realized (random) prediction set is obtained by sampling independent Bernoulli variables
\begin{align}
\label{eq:bps:sampling}
    z_{ik} \sim \mathrm{Bernoulli}(b_{ik}), \quad \gC_\text{BPS}(\vx_i) := \{k \in [K] : z_{ik} = 1\}.
\end{align}
Standard (deterministic) prediction sets are a special case of BPS with $\vb_i \in \{0,1\}^K$. With this probabilistic formulation, we redefine both the set size and the conditional coverage in expectation. For input $\vx_i$, the (expected) set size is
\begin{align}
\label{eq:bps:set-size}
    \E\big[|\gC_\text{BPS}(\vx_i)|\big] = \sum_{k=1}^K \Pr(z_{ik}=1) = \sum_{k=1}^K b_{ik} = \vb_i \cdot \boldsymbol{1}, 
\end{align}
and the conditional coverage is
\begin{align}
\label{eq:bps:conditional-coverage}
    \Pr\!\big(y_i \in \gC_\text{BPS}(\vx_i) \mid \vx_i\big)
    = \sum_{k=1}^K p_{ik}\Pr(z_{ik}=1)
    = \vb_i \cdot \vp_i.
\end{align}
Note that in both measures $\vb_i$ is assumed to be fixed; i.e., derived deterministically as a function of $\vx_i$. Therefore, the probability is over the randomness of the sets being sampled from $\vb_i$, and the label $y$ being sampled from $\vp_i$. In some cases, the $\vb_i$ itself will be the outcome of a process that is random (e.g., over the randomness of the calibration set in case some conformalization is used), and in that case, the metrics will be evaluated in expectation over $\vb_i$ as well.

\paragraph{Reformulating APS as BPS.} 
Just to illustrate the generality of the definition, we show that adaptive prediction sets (APS), for any value of the conformal threshold, are a special case of BPS. 
Specifically, given a label distribution $\vpi_i$, let $\sigma_i$ be a permutation of $[K] := \{1, \ldots, K\}$ that orders the labels in decreasing order of probability, i.e.,
\begin{align}
\tilde{\pi}_{ij} := \pi_{i,\sigma_i(j)}, 
\quad \text{with} \quad
\tilde{\pi}_{i1} \ge \tilde{\pi}_{i2} \ge \dots \ge \tilde{\pi}_{iK}.
\end{align}

For any threshold $\tau \in [0,1]$, APS defines
$$
L(\vpi_i, \tau)
=
\min \Big\{ k \in [K] : \sum_{j=1}^{k} \tilde{\pi}_{ij} \ge \tau \Big\}
$$
and constructs a randomized set as
\begin{align}\label{eq:APS}
    \gC_\text{APS}(\vx_i,  \vpi_i, \tau, u)  = \begin{cases}
\text{top } L(\vpi_i, \tau) - 1 \text{ labels}, & \text{if } u \leq u_0 \\
\text{top } L(\vpi_i, \tau) \text{ labels}, & \text{otherwise}
\end{cases} \, 
\end{align}  
with 
\begin{align*}
u  \sim \mathrm{Uniform}[0,1] \, \text{ and }
u_0  = \tilde{\pi}_{iL(\vpi_i, \tau)}^{-1} \; \Big[ \sum_{j=1}^{L(\vpi_i, \tau)} \tilde{\pi}_{ij} - \tau \Big]  \, .
\end{align*}
In the BPS formulation, this construction corresponds to an inclusion probability vector defined over the original class indices as
\begin{align}\label{eq:APS_b}
b_{i,\sigma_i(j)} =
\begin{cases}
1, & j < L(\vpi_i, \tau) \\
1 - u_0, & j = L(\vpi_i, \tau) \\
0, & j > L(\vpi_i, \tau)
\end{cases} \, .
\end{align}

Thus, APS is a Bernoulli prediction set in which all labels with rank strictly below the threshold $L(\vpi_i, \tau)$ are deterministically included, the boundary label is included with probability $1 - u_0$, and all lower-ranked labels are excluded. While APS aims for a specific objective under a specific constraint (which is the minimum set size while preserving conditional coverage over a single predictive distribution), BPS is a general framework over which we can define more general constraints and objectives, as we further discuss. 
\section{Set Prediction from Credal Sets}
\label{sec:Set Prediction for Credal Sets}
We aim to construct prediction sets given a credal set predictor $ \gQ_i := \gQ(\vx_i) \subseteq \triangle^K $. A common representation of credal sets is the convex hull of $m$ label distributions (the vertices of a convex polytope), i.e., $\gQ_i = \convex(\{\vpi_{i}^{(j)}\}_{j = 1}^m)$. This representation aligns well with practice, as many credal set predictors are obtained from ensembles of probabilistic classifiers \citep{wang2024credal, wang2025credalwrapper, nguyen2025credal}. While we develop our framework for this polytope representation, more general credal sets can also be handled. In particular, if a credal set is not already given as the convex hull of finitely many points, it can first be approximated by a (finite) enclosing polytope, and the vertices of this polytope can then be used within our framework.

There is no unique way to design prediction sets, regardless of whether the underlying predictor is a probabilistic classifier or a credal set predictor. Therefore, beyond marginal coverage, which is satisfied by any CP-based set predictor, we require additional desiderata and an explicit objective to guide the construction. For instance, achieving conditional coverage with the smallest possible set size, when given access to the true label distributions, is the principle underlying the design of APS. We aim to follow a design principle analogous to that of APS but tailored to credal set predictors. To this end, we replace the assumption of access to the true label distribution with the assumption of access to \textit{valid} credal sets, defined as follows.
\begin{definition}[valid credal sets]
\label{def:valid-credal-sets}
    A credal set $\gQ_i$ as a prediction for $\vx_i$ is said to be valid if $\vp_i \in \gQ_i$, that is, if the oracle label distribution lies within the credal set.
\end{definition}
It is worth noting that assuming access to valid credal sets is a weaker assumption than having access to oracle label distributions. Indeed, the former still accounts for epistemic uncertainty and does not require precise knowledge of the ground truth. In the following, we first consider the setting in which this validity assumption holds for every input $\vx_i$ and formulate our desiderata accordingly, leading to a concrete set predictor (\cref{Sec:CaseI}). We then study the case in which this assumption holds only with high probability over random draws of data points (\cref{Sec:CaseII}), and finally, the case where the validity of the credal sets is unknown (\cref{sec:CaseIII}).
\subsection{Case~I: valid credal sets}
\label{Sec:CaseI}
We begin with the setting in which the credal set predictor outputs a valid credal set for every input, that is, $\forall \vx_i,\ \vp_i \in \gQ_i$. Given a valid credal set $\gQ_i$, our objective is to construct a prediction set $\gC(\vx_i)$ based solely on $\gQ_i$. Since the true label distribution lies in the credal set, a natural desideratum is that the resulting prediction set achieves conditional coverage of a nominal level $1 - \alpha$. However, because the exact distribution $\vp_i$ is unknown, this requirement must hold uniformly over all distributions contained in $\gQ_i$.

\begin{desideratum}[Conditional Coverage]
\label{thrm:desideratum:conditional}
For valid credal sets, the prediction sets should satisfy conditional coverage uniformly over the credal set\,---\,formally,
\begin{equation}
\label{eq:desideratum:conditional}
\forall \vp \in \gQ_{i}:\quad
\Pr_{y \sim \mathrm{Categorical}(\vp)}\big[y \in \gC(\vx_{i})\big] \ge 1 - \alpha.
\end{equation}
\end{desideratum}
As mentioned before, credal set predictors, unlike probabilistic classifiers, are able to represent epistemic uncertainty. Prediction sets are meant to reflect the predictive uncertainty of the learner. Accordingly, a second natural desideratum is that the prediction sets deflate as epistemic uncertainty decreases. Although it is not straightforward to define a total order over arbitrary credal sets in terms of the amount of epistemic uncertainty they represent, this requirement can still be formalized under a partial order (subset) relationship: if one credal set is a subset of another, it reflects less epistemic uncertainty, and the resulting prediction set should be smaller or equal in size. Formally, the following desideratum should be achieved by any prediction set that is adaptive to epistemic uncertainty. 
\begin{desideratum}[Epistemic Adaptivity]
\label{thrm:desideratum:adaptivity}
Given two valid credal sets $\gQ_i$ and $\gQ_i'$ at point $\vx_i$ such that $\vp_i \in \gQ_i \subseteq \gQ_i'$, the prediction set constructed using $\gQ_i$ ought to be smaller or equal in size to that constructed using $\gQ_i'$.
\end{desideratum}
We design a set predictor that satisfies the above desiderata while minimizing the set size as our objective. This leads to the design of the optimal Bernoulli prediction sets.
\paragraph{Optimal Bernoulli Prediction Sets.}
\label{par:OptimalBPS}
Over the previously defined BPS framework (\cref{sec:BPS}), we aim to find $\vb_i$'s with the objective of finding the smallest possible (random) prediction set that satisfies both desiderata. Formally, for a given credal set $\gQ_i = \convex(\{\vpi_{i}^{(j)}\}_{j = 1}^m)$, we determine an inclusion probability vector $\vb^\lambda_i$ as a solution to the following optimization problem:
\begin{equation}\label{eq:optimal_BPS} 
    \begin{aligned} 
    \vb_i^\lambda = \argmin_{\vb \in [0,1]^K } \vb \cdot \boldsymbol{1}  \quad \text{s.t.} \quad \forall j\in [m]:\quad \vb \cdot \vpi_i^{(j)} \ge \lambda,  
    \end{aligned} 
\end{equation}
where $\lambda$ is the threshold determining the required level of conditional coverage. 
In particular, to achieve $1-\alpha$ conditional coverage, we set $\lambda = 1-\alpha$ and refer to the resulting predictor as BPS$(1-\alpha)$.

It turns out that the optimization problem (\ref{eq:optimal_BPS}) is a linear program and can therefore be solved efficiently using standard solvers. In fact, it can be viewed as a multi-dimensional fractional knapsack problem.
As a consequence of this linear structure, the solution $\vb_i^\lambda$ contains at most $\min(K,m)$ fractional components, i.e., entries $b^\lambda_{ik} \in (0,1)$. 

In the following, we show that the resulting prediction set $\gC_{\mathrm{BPS}}(\vx_i)$ is the smallest set (in expectation) that conforms with the predefined desiderata when given access to a valid credal set.
\begin{proposition}\label{prop:optimal}
For any $\alpha \in (0,1)$ and a given credal set 
$\gQ_i = \convex(\{\vpi_{i}^{(j)}\}_{j = 1}^m)$, 
the prediction set $\gC_{\mathrm{BPS}}(\vx_i)$ in (\ref{eq:bps:sampling}), 
constructed using $\vb_i^{1-\alpha}$ from (\ref{eq:optimal_BPS}), 
is the smallest (possibly randomized) prediction set that satisfies 
$(1-\alpha)$ conditional coverage with respect to every distribution 
$\vp \in \gQ_i$.
\end{proposition}
All proofs are deferred to \cref{sec:appendix:proofs}. Moreover, 
\cref{thrm:desideratum:adaptivity} is satisfied by construction: the smallest 
prediction set that guarantees conditional coverage for a given credal set 
does not expand as epistemic uncertainty decreases. Indeed, a smaller 
credal set induces fewer constraints in (\ref{eq:optimal_BPS}), and therefore, the optimal (expected) set size cannot increase (and may decrease).

\begin{proposition}\label{prop:EUadaptive}
Consider two valid credal sets $\gQ_i$ and $\gQ_i'$ at point $\vx_i$ such that 
$\vp_i \in \gQ_i \subseteq \gQ_i'$. 
Let $\vb_i^\lambda$ and $\vb_i'^\lambda$ be the solutions to 
(\ref{eq:optimal_BPS}) corresponding to $\gQ_i$ and $\gQ_i'$, respectively. 
Then we have $\vb_i^\lambda \cdot \boldsymbol{1} 
\;\le\; 
\vb_i'^\lambda \cdot \boldsymbol{1}.$
\end{proposition}
In \cref{sec:appendix:toy}, we provide toy examples that illustrate this property in a clear and intuitive manner. Another desirable property of the set predictor in \cref{eq:optimal_BPS} is that it recovers APS when the credal set is a singleton. 

\begin{proposition}\label{prop:equivalence}
For a fixed threshold $\lambda$ and a singleton credal set $\gQ_i = \{\vpi_i\}$, let $\vb_i^\lambda$ be the solution of (\ref{eq:optimal_BPS}) with the single constraint given by $\vpi_i$. Then $\vb_i^\lambda$ is equal to the Bernoulli inclusion vector of APS defined in (\ref{eq:APS_b}) for the same $\vpi_i$ and the threshold $\lambda$. Consequently, the resulting prediction sets constructed by BPS and APS are equivalent in expectation.
\end{proposition}

\begin{table*}[t!]
    \centering
    \caption{Methods and guarantees across the three credal set settings.}
    \label{tab:methods}
    \resizebox{\textwidth}{!}{%
    \begin{tabular}{
        >{\centering\arraybackslash}m{2cm}
        >{\centering\arraybackslash}m{6cm}
        >{\centering\arraybackslash}m{6cm}
        >{\centering\arraybackslash}m{6cm}
    }
        \toprule
        \bfseries 
        & \textbf{Valid Credal Sets}
        & \textbf{Partially Valid Credal Sets}
        & \textbf{Credal Sets with Unknown Validity} \\
        \midrule

        &
        \includegraphics[width=0.17\textwidth]{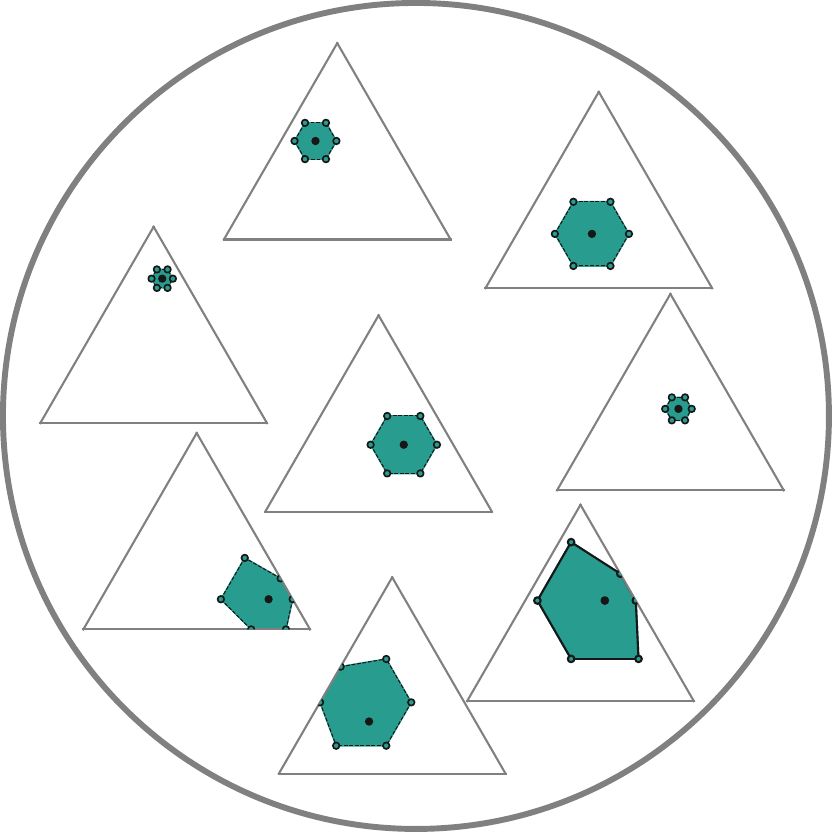}
        &
        \includegraphics[width=0.17\textwidth]{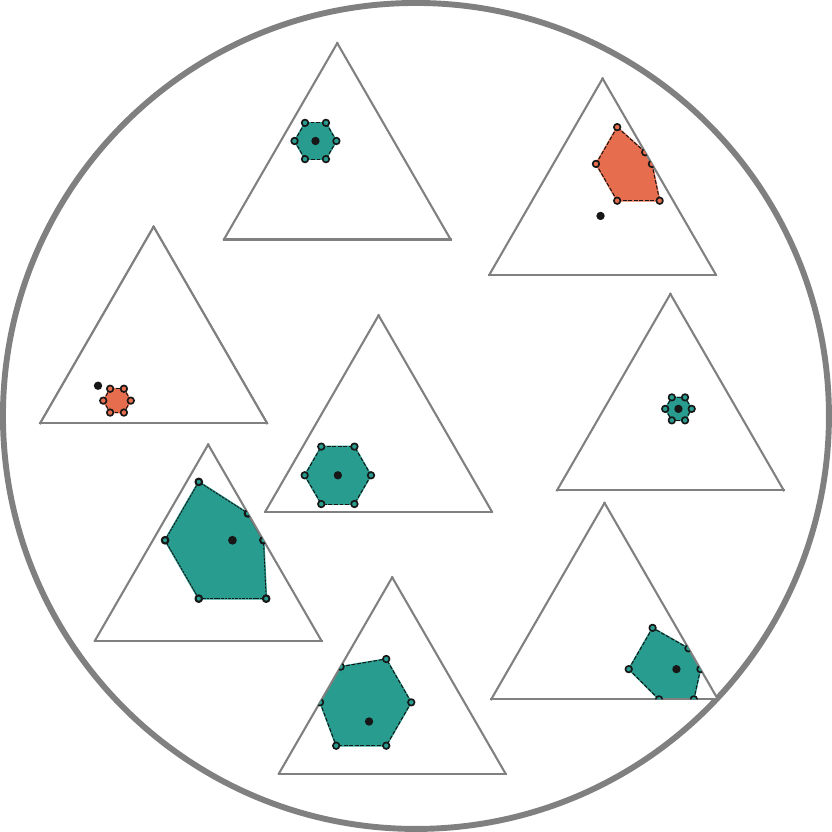}
        &
        \includegraphics[width=0.17\textwidth]{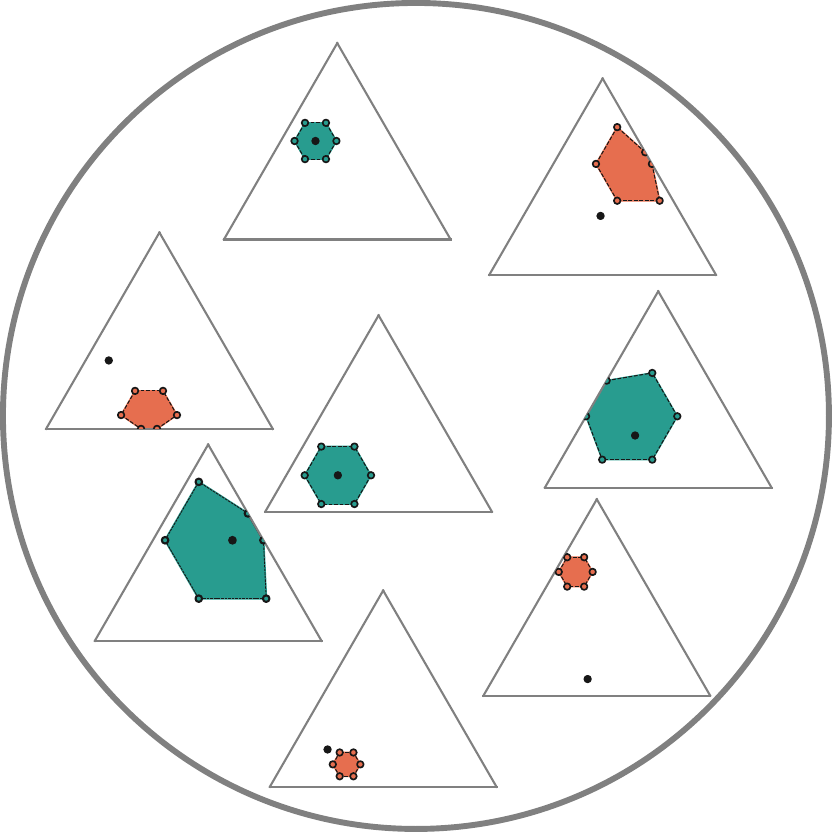}
        \\

        \bfseries
        &
        $\vp_i \in \gQ_i \;\; \forall\, \vx_i$
        &
        $\Pr_{\vx_i}\!\left[\vp_i \in \gQ_i \right] \ge 1-\epsilon$
        &
        unknown validity \\[0.8em]

        \bfseries Set Predictor
        &
        BPS$(1-\alpha)$
        &
        BPS$(1-\alpha)$
        &
        Calibrated BPS$(\lambda^\star)$
        \\[0.8em]

        \bfseries Guarantee Type
        &
        \includegraphics[width=0.2\textwidth]{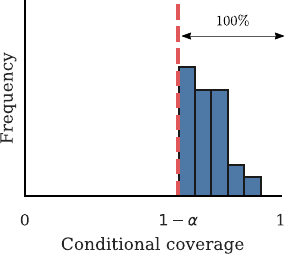}
        &
        \includegraphics[width=0.2\textwidth]{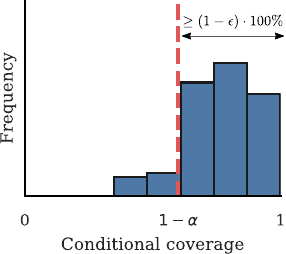}
        &
        \includegraphics[width=0.2\textwidth]{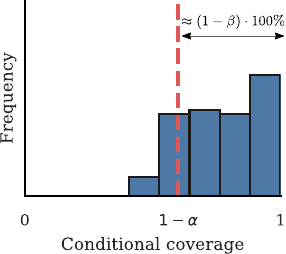}
        \\
         &
        $\vb_i^{1-\alpha} \cdot \vp_i \ge 1-\alpha \;\; \forall\, \vx_i$
        &
        $\Pr_{\vx_i}\!\left[\vb_i^{1-\alpha} \cdot \vp_i \ge 1-\alpha\right] \ge 1-\epsilon$
        &
        $\Pr_{\gD_\dagger}\!\left[\vb_{n+1}^{\lambda^\star} \cdot \vp_{n+1} \ge 1-\alpha\right] \ge 1-\beta$
        \\

        \bottomrule
    \end{tabular}
    }
\end{table*}
\subsection{Case~II: Partially valid credal sets}
\label{Sec:CaseII}
We now consider the case where credal sets are not guaranteed to be valid for every input, but instead are valid with high probability\,---\,formally,
\begin{align}\label{eq:partially_valid_credal}
    \Pr_{\vx_i}\!\left[\vp_i \in \gQ_i \right] \ge 1-\epsilon \, .
\end{align}
From a frequentist perspective, this means that over a random draw of inputs, the associated credal sets are valid for at least a $1-\epsilon$ fraction of cases.
Such partially valid credal sets may arise, for instance, from conformalized credal set predictors \citep{javanmardi2024conformalized}, where the credal sets satisfy a marginal CP guarantee of containing the label distribution with high probability.

In this setting, we again apply BPS$(1-\alpha)$ directly to $\gQ_i$, without using additional CP calibration. As a consequence, the resulting prediction sets satisfy the following PAC-style conditional coverage guarantee:
\begin{align}\label{eq:partially_valid_credal:guarantee}
    \Pr_{\vx_i}\!\left[\vb_i^{1-\alpha}\cdot \vp_i \ge 1-\alpha \right] \ge 1-\epsilon.
\end{align}

Intuitively, on at least a $(1-\epsilon)$ fraction of inputs, namely, those for which the credal sets are valid, the conditional coverage is at least at the desired $(1-\alpha)$ level.
\begin{proposition}
\label{prop:pac:partially-valid}
Suppose a credal set predictor outputs sets
$\gQ_i = \convex(\{\vpi_i^{(j)}\}_{j=1}^m)$
satisfying
\begin{align*}
\Pr_{\vx_i}\left[\vp_i \in \gQ_i\right] \ge 1-\epsilon .
\end{align*}
For any $\alpha \in (0,1)$, let $\gC_{\mathrm{BPS}}(\vx_i)$ be constructed as in (\ref{eq:bps:sampling}) using the solution $\vb_i^{1-\alpha}$ of (\ref{eq:optimal_BPS}). Then
\begin{align*}
\Pr_{\vx_i}\left[
\Pr\big(y_i \in \gC_{\mathrm{BPS}}(\vx_i) \mid \vx_i\big)
\ge 1-\alpha
\right]
\ge 1-\epsilon .
\end{align*}
\end{proposition}
It is worth noting that the validity of the credal set $\gQ_i$ is a sufficient condition for the prediction set $\vb_i^\lambda$ in (\ref{eq:optimal_BPS}) to guarantee $\lambda$ conditional coverage with respect to the true label distribution $\vp_i$, but it is not necessary. In particular, even if $\vp_i \notin \gQ_i$, the resulting Bernoulli vector may still satisfy $\vb_i^\lambda \cdot \vp_i \ge \lambda$. That being said, the guarantee in (\ref{eq:partially_valid_credal:guarantee}) should typically be interpreted as a worst-case statement. In practice, the realized conditional coverage with respect to $\vp_i$ may hold for a substantially larger fraction than $1-\epsilon$.

So far, in both Case~I (valid credal sets) and Case~II (partially valid credal sets), the prediction sets are obtained directly by solving the optimization problem in~(\ref{eq:optimal_BPS}). In these two settings, BPS$(1-\alpha)$ yields prediction sets that either satisfy conditional coverage for every input (Case~I) or satisfy it with high probability (Case~II), without requiring an additional conformal calibration step.
\subsection{Case~III: Credal Sets with Unknown Validity}\label{sec:CaseIII}
We finally consider the most general setting, in which no validity assumption is imposed on the outputs of the credal set predictor. In this case, the optimization (\ref{eq:optimal_BPS}) alone does not provide any formal guarantee on the resulting conditional coverage. Therefore, in order to obtain a distribution-free guarantee, we resort to a conformal framework. Since the predictor in (\ref{eq:optimal_BPS}) is monotone in $\lambda$ (i.e., the resulting sets inflate as $\lambda$ increases), we adopt the conformal risk control framework, which calibrates $\lambda$ to control a user-defined notion of risk.

As in the previous two cases, our goal is to obtain a conditional-type guarantee with respect to the true label distribution $\vp_i$. To this end, we consider access to a holdout first-order calibration data set $\Dcal^\mathrm{first} = \{(\vx_i, \vp_i)\}_{i=1}^n$ that is exchangeable with the future test point $(\vx_{n+1}, \vp_{n+1})$. We define the risk as the indicator that the conditional coverage falls below the target level $1-\alpha$, i.e.,
\begin{align}\label{eq:risk:satisfaction}
\gL(\vx_i, \lambda) 
:= \mathds{1}\!\left[\vb_i^\lambda \cdot \vp_i < 1-\alpha\right].
\end{align}

For the set predictor in (\ref{eq:optimal_BPS}), the calibration data $\Dcal^\mathrm{first}$, and the risk defined in (\ref{eq:risk:satisfaction}), CRC determines the threshold $\lambda^\star$ as the solution to 
\begin{equation}\label{eq:lambda_CP:satisfaction}
\inf \left\{ \lambda : 
\sum_{i=1}^n 
\mathds{1}\!\left[\vb_i^\lambda \cdot \vp_i \ge 1-\alpha\right] 
\ge \left\lceil (1-\beta)(n+1) \right\rceil 
\right\}, 
\end{equation}
which bounds the expected risk at the user-specified tolerance $\beta \in [0,1]$. This translates into the guarantee that
\begin{align}\label{eq:guarantee:unknown}
\Pr_{\gD_\dagger}\!\left[
\vb_{n+1}^{\lambda^\star} \cdot \vp_{n+1} \ge 1-\alpha
\right] \geq 1-\beta, 
\end{align}
where $\gD_\dagger = \Dcal^\mathrm{first} \cup \{(\vx_{n+1}, \vp_{n+1})\}$.
In our experiments, we refer to this approach as calibrated BPS with first-order data. \cref{tab:methods} provides an overview of all three cases, the corresponding set predictors, and the guarantees they deliver.

\begin{proposition}
\label{prop:pac:unknown}
Let a credal set predictor output $\gQ_i = \convex(\{\vpi_i^{(j)}\}_{j=1}^m)$ without any validity assumption, and let $\alpha,\beta \in (0,1)$ be arbitrary. 
Let $\lambda^\star$ be defined as in~(\ref{eq:lambda_CP:satisfaction}) using the first-order calibration data $\Dcal^\mathrm{first}$ that is exchangeable with the test point. 
Let $\gC_{\mathrm{BPS}}(\vx_i)$ be constructed as in~(\ref{eq:bps:sampling}) using $\vb_i^{\lambda^\star}$ from~(\ref{eq:optimal_BPS}). 
Then the resulting prediction sets satisfy
\begin{align*}
\Pr_{\gD_\dagger}\!\left[
\Pr\big(y_{n+1} \in \gC_{\mathrm{BPS}}(\vx_{n+1}) \mid \vx_{n+1}\big) \ge 1 - \alpha 
\right] 
\ge 1-\beta .
\end{align*}
\end{proposition}
This is again a PAC-style guarantee, similar to the one obtained in the case of partially valid credal sets. 
However, in contrast to that setting, the parameter $\beta$ is user specified and directly controls the tolerance for violations of the conditional coverage requirement.
Moreover, since this is a conformal guarantee, it is essentially tight in finite samples: under exchangeability, the coverage probability is also upper bounded by $1-\beta + \frac{2}{n+1}$, reflecting the usual sharpness of conformal bounds. \cref{sec:appendix:cond-to-marg} shows how these conditional guarantees translate into marginal ones.
\paragraph{Remark (calibration beyond BPS).}
Although we instantiated the calibration mechanism with BPS, the conformalization step is not restricted to this particular set predictor. In principle, for any set predictor $\gC_\lambda(\vx_i)$ parameterized by $\lambda$, one may define the risk as the indicator that the conditional coverage falls below $1-\alpha$. 
As long as this risk is non-increasing in $\lambda$, conformal risk control can be applied to determine a calibrated $\lambda^\star$ that yields the same distribution-free guarantee (see \cref{sec:appendix:alternative:set}). 
\paragraph{Remark (data requirements and the zero-order proxy).}
The first-order data used in this case are required only for calibration and not for training. 
Credal set predictors are typically trained using standard zero-order data, and therefore, the calibration set does not need to be very large.
However, when no first-order calibration data (or reliable approximations thereof) are available, one possible approach is to use the one-hot encoding $\ve_{y_i}$ of the observed labels as a proxy for $\vp_i$ and treat these as first-order observations within the same calibration procedure. 
This yields an approximate version of the guarantee (see \cref{sec:appendix:alternative}). 
In our experiments, we refer to this variant as \emph{calibrated BPS with zero-order data}, in contrast to the original \emph{calibrated BPS with first-order data}.

\section{Experiments}\label{sec:experiments}
\begin{figure}[t!]
    \centering
    \includegraphics[width=0.85\columnwidth]{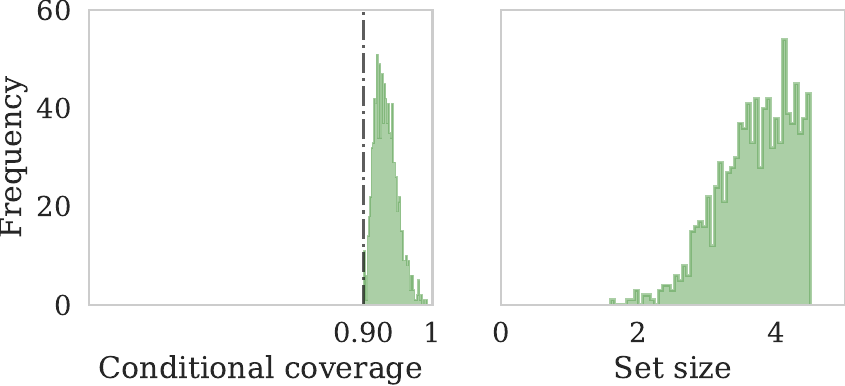}
    \caption{Histograms of the conditional coverage and (expected) set size of BPS$(1-\alpha)$ under valid credal sets. Here, $1-\alpha = 0.9$.}
    \label{fig:valid}
\end{figure}
In this section, we evaluate the performance of our proposed BPS under different scenarios. First, we consider a setting in which valid credal sets are available. Since such a setup does not yet exist in practice, we simulate it synthetically. Next, we study a scenario in which the credal sets are only partially valid. In these two cases, we simply apply BPS$(1-\alpha)$ without using conformal prediction. Finally, we turn to the setting where the validity of the credal sets is unknown. In this case, we apply BPS together with the proposed calibration procedure. All implementations and experiments can be found on our \href{https://github.com/alireza-javanmardi/conformal-BPS}{GitHub}
repository.
\footnote{{The link to the code: \url{https://github.com/alireza-javanmardi/conformal-BPS}}}

\paragraph{Datasets.} For the partially valid and unknown-validity settings, we conduct experiments on three real-world datasets for which first-order data is available. These include \emph{CIFAR-10} \citep{krizhevsky2009learning}, a 10-class image classification dataset, for which \emph{CIFAR-10H} \citep{peterson2019human} provides label distributions over the test set; \emph{ChaosNLI} \citep{nie2020what}, a 3-class natural language inference dataset; and \emph{QualityMRI} \citep{obuchowicz2020qualityMRI, schmarje2022benchmark}, a two-class medical image classification dataset. For all these datasets, we additionally use the corresponding zero-order data for evaluation purposes. Details regarding these datasets, the train–test–calibration splits, and information about the models (both the credal set predictors and the probabilistic classifiers) are provided in \cref{sec:appendix:exp-details}.
\paragraph{Metrics.} In all experiments, we consider three evaluation metrics. Since the main focus of the paper is conditional coverage, we report the conditional coverage satisfaction (\textbf{Cond. Sat.}), defined as the complement of the risk in (\ref{eq:risk:satisfaction}). We also report the (expected) \textbf{set size}, as defined in (\ref{eq:bps:set-size}), and the (expected) coverage of the realized label (\textbf{Marg. Cvg.}). All metrics are averaged over the test inputs.
\subsection{Valid credal sets}
To illustrate the performance of BPS on valid credal sets, we first generate $n = 1000$ instances, each represented by a convex hull of $m = 10$ randomly selected label distributions with $K = 5$ classes. For each instance, we sample a random convex combination of the $m$ distributions and treat it as the true label distribution. We then apply BPS$(1-\alpha)$ to each credal set and evaluate the conditional coverage and the set size accordingly. \cref{fig:valid} shows that the resulting prediction sets satisfy the conditional coverage property for every instance.
\subsection{Partially valid credal sets}\label{sec:experiments:partially-valid}
\begin{table}[t!]
    \centering
    \caption{Performance of BPS$(1-\alpha)$ on the partially valid credal sets for CIFAR-10. Here $1-\alpha = 0.9.$}
    \resizebox{\columnwidth}{!}{
        \begin{tabular}{l|cccc}
        \toprule
        $\epsilon$ & Credal Cvg.  & Cond. Sat. & Marg. Cvg. & Set Size \\
        \midrule
        0.10 & 0.90 $\pm$ 0.01 & 0.97 $\pm$ 0.00 & 1.00 $\pm$ 0.00 & 6.55 $\pm$ 0.23 \\
        0.20 & 0.80 $\pm$ 0.01 & 0.93 $\pm$ 0.01 & 0.99 $\pm$ 0.00 & 1.72 $\pm$ 0.43 \\
        0.30 & 0.70 $\pm$ 0.01 & 0.84 $\pm$ 0.01 & 0.95 $\pm$ 0.01 & 1.18 $\pm$ 0.02 \\
        \bottomrule
        \end{tabular}
        }
    \label{tab:partially-valid:cifar10}
\end{table}
\begin{figure*}[ht!]
\centering
\includegraphics[width=0.9\textwidth]{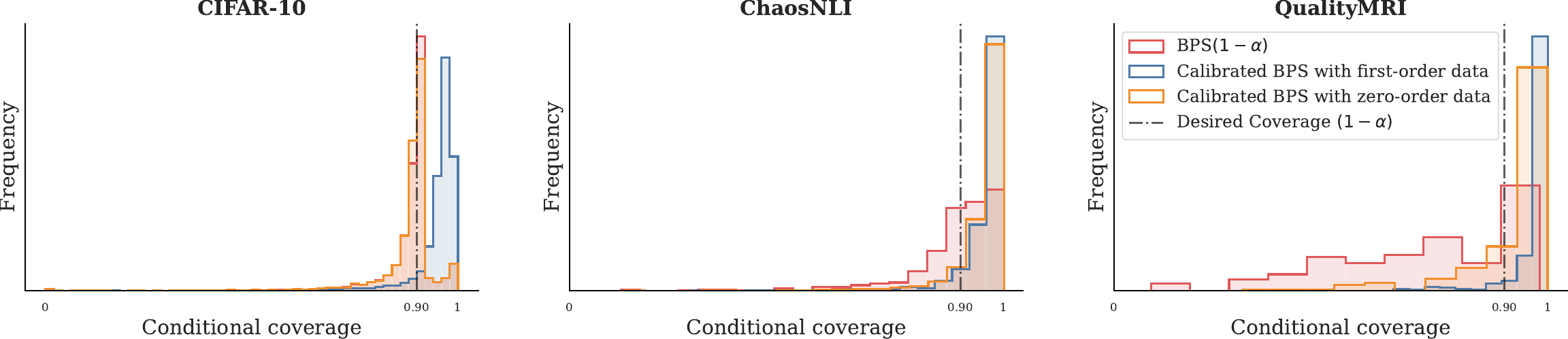}
\caption{Conditional coverage histograms of BPS$(1-\alpha)$ and calibrated BPS with zero- and first-order data on credal sets with unknown validity across three real-world datasets. $1-\alpha = 0.9$ and $1-\beta = 0.9$.}
\label{fig:unknown}
\end{figure*}

\begin{table*}[t!]
    \centering
		\caption{Performance comparison of BPS$(1-\alpha)$ and calibrated BPS with zero- and first-order data on credal sets with unknown validity across three real-world datasets. $1-\alpha = 0.9$ and $1-\beta = 0.9$. }
        \label{tab:real-world:BPS}
  \resizebox{\textwidth}{!}{
\begin{tabular}{l|ccc|ccc|ccc}
\toprule
&  \multicolumn{3}{c}{\textbf{CIFAR-10}} & \multicolumn{3}{|c}{\textbf{ChaosNLI}} & \multicolumn{3}{|c}{\textbf{QualityMRI}}\\
\cmidrule(r){2-4}
\cmidrule(r){5-7}
\cmidrule(r){8-10}
 Approach & Cond. Sat. & Marg. Cvg. & Set Size & Cond. Sat. & Marg. Cvg. & Set Size & Cond. Sat. & Marg. Cvg. & Set Size \\
\midrule
BPS $(1-\alpha)$ & 0.52 $\pm$ 0.00 & 0.90 $\pm$ 0.00 & 1.09 $\pm$ 0.00 & 0.61 $\pm$ 0.01 & 0.89 $\pm$ 0.01 & 2.00 $\pm$ 0.02 & 0.33 $\pm$ 0.05 & 0.81 $\pm$ 0.02 & 1.36 $\pm$ 0.04 \\
Calibrated BPS with first-order & 0.90 $\pm$ 0.01 & 0.98 $\pm$ 0.00 & 1.45 $\pm$ 0.05 & 0.91 $\pm$ 0.02 & 0.95 $\pm$ 0.01 & 2.40 $\pm$ 0.04 & 0.94 $\pm$ 0.04 & 0.99 $\pm$ 0.01 & 1.93 $\pm$ 0.02 \\
Calibrated BPS with zero-order & 0.52 $\pm$ 0.00 & 0.90 $\pm$ 0.00 & 1.09 $\pm$ 0.00 & 0.90 $\pm$ 0.01 & 0.95 $\pm$ 0.01 & 2.35 $\pm$ 0.04 & 0.77 $\pm$ 0.10 & 0.96 $\pm$ 0.03 & 1.81 $\pm$ 0.07 \\
\bottomrule
\end{tabular}

    }
\end{table*}
In order to obtain credal sets that are partially valid, we consider conformalized credal set predictors \citep{javanmardi2024conformalized}. To that end, we use a probabilistic classifier as the base learner together with a set of first-order calibration data $\Dcal^\mathrm{first} = \{(\vx_i, \vp_i)\}_{i = 1}^n$. On this data, we compute the total variation distance between the true label distribution $\vp_i$ and the estimated one $\vpi_i$, forming the score set $\gS_\text{TV} = \{\operatorname{TV}(\vpi_i, \vp_i)\}_{i=1}^{n}$.
For any future test input $\vx_{n+1}$, the conformalized credal set is then defined as
\begin{align*}
    \gQ_{n+1} = \{ \vq \in \triangle^K : \operatorname{TV}(\vpi_{n+1}, \vq) \leq \quantile{1 - \epsilon}{\gS_\text{TV}} \}.
\end{align*}
For every $\epsilon \in (0,1)$, this credal set is guaranteed to cover the true label distribution with high probability, i.e.,
\begin{align*}
    \Pr_{\gD_\dagger}[\vp_{n+1} \in \gQ_{n+1}] \ge 1 - \epsilon.
\end{align*}
In \cref{thm:tv-credal} in \cref{sec:tv-distance credal}, we provide an analytical derivation of the vertices of the credal sets constructed in this manner.
Given the credal sets defined as the convex hull of their vertices, we then apply BPS$(1-\alpha)$ on the test set. For this method, we randomly split the evaluation data into calibration and test subsets, repeat this procedure $10$ times with different random seeds, and report the results averaged over these runs.

In \cref{tab:partially-valid:cifar10}, we show the performance of this method on CIFAR-10 for $\epsilon \in \{0.1, 0.2, 0.3\}$. For this setting, we also report the coverage of the credal sets (\textbf{Credal Cvg.}), which matches the theoretical guarantee. As $\epsilon$ increases, the credal sets shrink in general, which results in smaller prediction sets. Furthermore, it can be seen that the conditional coverage satisfaction is in general higher than $1-\epsilon$, which is also expected, as explained in \cref{Sec:CaseII}. This is mainly because prediction sets constructed from invalid credal sets may also satisfy conditional coverage. We have also applied the same approach to the other two datasets and provide the corresponding results in \cref{tab:partially-valid:full} in \cref{sec:appendix:partial}.

\subsection{Credal sets with unknown validity}\label{sec:experiments:unknown}
For this case, we consider the Credal Ensembling (CreEns) approach \citep{nguyen2025credal} as a credal set predictor, where the credal set is constructed as the convex hull of the softmax outputs of an ensemble of neural networks (see \cref{subsec:appendix:models} for further details on the CreEns model). For each dataset, we randomly set aside a portion of the data for calibration and apply calibrated BPS with both zero- and first-order data. This random split into calibration and test sets is repeated $10$ times with different random seeds; prediction sets are constructed in each run, and the reported results are averaged over these seeds. In addition, we consider BPS$(1-\alpha)$ (i.e., BPS without calibration at nominal level $1-\alpha$) as a baseline.

\cref{fig:unknown} compares the conditional coverage over the test set for the three approaches on each dataset, and \cref{tab:real-world:BPS} provides the full performance comparison over the test sets for all three approaches and datasets. As expected, BPS calibrated with first-order data performs best, satisfying the desired conditional coverage for at least a $1-\beta=0.9$ fraction of cases. In contrast, BPS$(1-\alpha)$ violates conditional satisfaction in all cases, although it produces the most efficient (smallest) prediction sets. Calibrated BPS with zero-order data improves over BPS$(1-\alpha)$ in terms of conditional coverage; in some cases, such as ChaosNLI, it performs comparably to BPS calibrated with first-order data. Overall, it lies between the other two methods with respect to both efficiency and conditional coverage.

The results in all three cases support our theoretical findings. In \cref{sec:appendix:alternative} and \cref{sec:appendix:alternative:set}, we further investigate the effect of alternative calibration strategies—corresponding to different risk definitions—on these three datasets in the setting of credal sets with unknown validity. Moreover, since the guarantee in (\ref{eq:guarantee:unknown}) can be achieved by any set predictor and is not restricted to BPS, we additionally examine the performance of the proposed calibration strategy when applied to alternative set predictors. \cref{sec:appendix:ood} additionally evaluates all methods under distribution shift.
\section{Conclusion}
In this paper, we consider the problem of set prediction in a classification setting where the underlying model is a credal set predictor that outputs a set of label distributions for each input. We start with the case where the credal sets are valid, meaning the true label distribution lies within the credal set for every input. We propose a parametric set predictor, the Bernoulli prediction set (BPS), that outputs the smallest possible set guaranteeing conditional coverage at each input. When validity holds only with high probability over inputs, BPS can be applied to achieve a PAC-style conditional coverage guarantee; that is, with high probability, the achieved conditional coverage is at least the desired level. When the validity of the credal sets is unknown, BPS cannot provide guarantees on its own. For this case, we use first-order calibration data together with a risk function, defined as the indicator that conditional coverage is below the desired level, to calibrate BPS through the Conformal Risk Control (CRC) framework. This approach also leads to a PAC-style conditional coverage guarantee. In this case, the amount of violation of conditional coverage can be controlled by the user-specified error rate of CRC.



\begin{acknowledgements} 
Alireza Javanmardi was supported by the Klaus Tschira Stiftung (project 00.019.2024). Willem Waegeman was supported by the Flemish Government under the Flanders AI Research program. The authors are thankful to Stefan Heid for his valuable insights on \cref{thm:tv-credal}. 
\end{acknowledgements}

\bibliography{UAI/references}

\newpage

\onecolumn

\appendix
\renewcommand{\thesection}{\Alph{section}}
\renewcommand{\thesubsection}{\Alph{section}.\arabic{subsection}}
\titleformat{\section}
  {\normalfont\large\bfseries}{Appendix \thesection:}{1em}{}

\crefalias{section}{appendix}
\crefalias{subsection}{appendix}
\crefname{appendix}{Appendix}{Appendices}
\Crefname{appendix}{Appendix}{Appendices}

\section{Related Work}\label{sec:related_work}
\paragraph{Set prediction under epistemic uncertainty.} The connection between conformal prediction and epistemic uncertainty-aware predictors has recently attracted increasing attention. \citet{rossellini2024integrating} incorporate epistemic uncertainty into conformalized quantile regression to improve conditional coverage, while \citet{azizi2026clear} construct prediction sets by adaptively balancing epistemic and aleatoric uncertainty. Our work differs from these two as we focus on classification, and their regression-based approaches are not directly transferable. Furthermore, these methods often rely on heuristics that do not provide the same theoretical guarantees as our framework.
\citet{karimi2024evidential} introduce a nonconformity score based on uncertainty estimates derived from evidential models, and \citet{cabezas2025epistemic} integrate epistemic uncertainty into conformal prediction by fitting a Bayesian model on top of nonconformity scores. What separates our work from these Bayesian and evidential approaches is that we study how epistemic uncertainty, represented explicitly via credal sets, can be incorporated into conformal prediction. Our focus is on providing finite-sample conditional coverage guarantees, whereas previous methods lack such formal properties in this setting.
The problem of deriving prediction sets from credal sets has also been studied in the imprecise probability literature \citep{caprio2024credalbayesian}. However, such approaches typically do not provide finite-sample coverage guarantees without additional structural assumptions.

\paragraph{Availability of first-order data.} In practice, many datasets are labeled by human annotators, and in many cases, multiple annotators are involved. This approach is becoming increasingly common, with crowdsourcing tools becoming an integral component of dataset collection, making multiple annotations per data instance more accessible \citep{kovashka2016crowdsourcing, sorokin2008utility}. Disagreement among annotators is also very common, especially in natural language tasks \citep{abercrombie2023consistency, nie2020what} and computer vision \citep{beyer2020we, peterson2019human, schmarje2022benchmark}. This facilitates the availability of such data in practice. In our case, training the credal set predictor relies solely on standard zero-order data; first-order data is only required during the calibration step and need not be large. The number of calibration samples primarily affects the sharpness of the coverage distribution in conformal prediction, and in practice, tens to a few hundred such samples are often sufficient.
\section{Proofs}\label{sec:appendix:proofs}
\paragraph{\cref{prop:optimal}}
\begin{proof}
    We show that this set achieves at least $1 - \alpha$ expected conditional coverage with respect to any $\vp \in \gQ_i$. Since $\vp$ lies in the convex hull of $\{\vpi_i^{(j)}\}_{j=1}^m$, we can write $\vp = \sum_{j = 1}^m \eta_j \vpi_i^{(j)}$ for some $\eta_j \in [0, 1]$ with $\sum_{j = 1}^m \eta_j = 1$. It follows that $\vb_i^{1 - \alpha} \cdot \vp = \sum_{j = 1}^m \eta_j \, (\vb_i^{1 - \alpha} \cdot \vpi_i^{(j)})$, and by definition, $\vb_i^{1 - \alpha} \cdot \vpi_i^{(j)} \ge {1 - \alpha}$ for all $j$.
 Then
    \begin{align*}
        \vb_i^{1 - \alpha} \cdot\vp = \sum_{j = 1}^m \eta_j (\vb_i^{1 - \alpha}  \cdot \vpi_i^{(j)}) \ge {1 - \alpha}\sum_{j = 1}^m \eta_j  = {1 - \alpha}.
    \end{align*}
    The fact that it is the minimal set directly follows from the definition of \cref{eq:optimal_BPS}. 
\end{proof}
\paragraph{\cref{prop:EUadaptive}}
\begin{proof}
    Since $\vp_i \in \gQ_i \subseteq \gQ_i'$, the solution to the optimization in \cref{eq:optimal_BPS} with $\gQ'_i$ is also feasible for the same optimization with $\gQ_i$. Therefore, $\vb_i^\lambda \cdot \boldsymbol{1} 
\;\le\; 
\vb_i'^\lambda \cdot \boldsymbol{1}$. 
\end{proof}
\paragraph{\cref{prop:equivalence}}
\begin{proof}
For a singleton credal set $\gQ_i = \{\vpi_i\}$, the optimization problem 
in \cref{eq:optimal_BPS} reduces to
\begin{align*}
\vb_i^\lambda
=
\argmin_{\vb} \ \vb \cdot \boldsymbol{1}
\quad
\text{s.t.}
\quad
\vb \cdot \vpi_i \ge \lambda .
\end{align*}
This is a fractional knapsack problem. Its optimal solution is obtained by 
sorting $\vpi_i$ in decreasing order and allocating inclusion probabilities 
to the corresponding labels greedily.
Let $\tilde{\vpi}_i$ denote the sorted version of $\vpi_i$ such that 
$\tilde{\pi}_{i1} \ge \dots \ge \tilde{\pi}_{iK}$, and let 
$L(\vpi_i,\lambda) =
\min \{ k \in [K] : \sum_{j=1}^{k} \tilde{\pi}_{ij} \ge \lambda \}$. Then the optimal vector in the sorted coordinate 
system is
$(1,\dots,1, b_{iL(\vpi_i,\lambda)}^\lambda, 0,\dots,0)$,
where the first $L(\vpi_i,\lambda)-1$ entries are equal to $1$, the entries after $L(\vpi_i,\lambda)$ are 
$0$, and
\[
b_{iL(\vpi_i,\lambda)}^\lambda
=
\frac{\lambda - \sum_{c=1}^{L(\vpi_i,\lambda)-1} \tilde{\pi}_{ic}}
{\tilde{\pi}_{iL(\vpi_i,\lambda)}} .
\]
Reordering back to the original label indexing yields $\vb_i^\lambda$, 
which satisfies $\vb_i^\lambda\cdot \vpi_i = \lambda$.

This coincides exactly with the Bernoulli inclusion vector underlying APS 
in (\ref{eq:APS_b}) for the same $\vpi_i$ and threshold $\lambda$. In both 
cases, the top $L(\vpi_i,\lambda)-1$ labels are included deterministically, 
the $L(\vpi_i,\lambda)$-th label is included with probability $b_{iL(\vpi_i,\lambda)}^\lambda$, 
and all remaining labels are excluded. Hence, the BPS and APS constructions 
induce the same inclusion probabilities and therefore yield equivalent 
prediction sets in expectation.
\end{proof}
\paragraph{\cref{prop:pac:partially-valid}}
\begin{proof}
    On the event ${\vp_i \in \gQ_i}$, validity of the credal set implies $\vb_i^{1-\alpha} \cdot \vp_i \ge 1-\alpha$ by construction of BPS. Since this event occurs with probability at least $1-\epsilon$, the result follows.
\end{proof}
\paragraph{\cref{prop:pac:unknown}}
\begin{proof}
By construction, $\lambda^\star$ controls the empirical risk in~(\ref{eq:risk:satisfaction}). 
The result follows directly from the distribution-free guarantee of conformal risk control, which ensures that the expected risk of the test point is bounded by $\beta$ under exchangeability.
\end{proof}
\section{From Conditional to Marginal Coverage Guarantees}\label{sec:appendix:cond-to-marg}
The conditional coverage guarantee directly implies a marginal coverage guarantee as well. In particular, if
$\vb_{i}^{1-\alpha} \cdot \vp_{i} \ge 1-\alpha, \quad \forall \vx_{i}$,
then automatically $\Pr\big[y_{n+1} \in \gC_{\mathrm{BPS}}(\vx_{n+1})\big] \ge 1-\alpha$, where the probability is over the test point and the randomization of the set.
However, in the PAC-style conditional coverage setting, the guarantee 
transfers to a marginal coverage guarantee of at least 
$(1-\delta)(1-\alpha)$, where $\delta$ denotes the corresponding error 
rate ($\delta = \epsilon$ under partially valid credal sets in 
\cref{Sec:CaseII}, and $\delta = \beta$ under credal sets with unknown 
validity in \cref{sec:CaseIII}). We show this for Case~III; the argument 
for Case~II is identical, with the probability taken over the random 
draw of the input instead of $\gD_\dagger$. Indeed,
\begin{align*}
\Pr_{\gD_\dagger}\big[y_{n+1} \in \gC_{\mathrm{BPS}}(\vx_{n+1})\big]
= \E_{\gD_\dagger}\big[\Pr(y_{n+1} \in \gC_{\mathrm{BPS}}(\vx_{n+1}) \mid \vx_{n+1}) \big]
= \E_{\gD_\dagger}\big[\vb_{n+1}^{\lambda^\star}\cdot \vp_{n+1}\big] 
\ge (1-\beta)(1-\alpha) + \underbrace{\beta \cdot 0}_{\text{worst case}}.
\end{align*}
In practice, however, we observe that the achieved marginal coverage under the PAC-style conditional guarantee is often substantially higher than this worst-case bound.
\section{TV-Distance Neighborhood: A Specific Credal Set with Analytical Corner Points}\label{sec:tv-distance credal}
In this section, we show that a credal set defined as the set of all label distributions in the simplex $\triangle^K$ with total variation distance less than or equal to $d$ from $\vp$ can be analytically described by a set of $K(K-1)$ corner points.
\begin{theorem}\label{thm:tv-credal}
     The $d$-neighborhood of a distribution $\vp \in \triangle^K$, i.e., the set of all distributions in $\triangle^K$ with a total variation distance $\leq d$, is a polytope defined by a finite number of corner points in $\triangle^K$.   
\end{theorem}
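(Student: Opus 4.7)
\textit{Proof proposal.} The plan is to reduce the statement to standard polyhedral theory by rewriting the single nonlinear TV constraint as a finite conjunction of linear inequalities. Then the set becomes a bounded polyhedron, and the existence of finitely many corner points is automatic.

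First, I would use the fact that on the simplex the TV distance admits the variational expression
\[
\operatorname{TV}(\vp,\vq) \;=\; \tfrac{1}{2}\sum_{k=1}^K |p_k - q_k| \;=\; \max_{S \subseteq [K]} \sum_{k \in S}(p_k - q_k),
\]
which follows from $\sum_k (p_k-q_k)=0$ (so positive and negative parts of the signed differences have equal sum). This converts the constraint $\operatorname{TV}(\vp,\vq)\le d$ into the $2^K$ affine inequalities $\sum_{k\in S}(p_k-q_k)\le d$, one per $S\subseteq[K]$.

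Second, I would intersect this with the simplex constraints $q_k\ge 0$ for $k\in[K]$ and the affine equation $\sum_k q_k=1$. The resulting set is the solution set of finitely many affine (in)equalities and is contained in $\triangle^K$, hence is bounded. By the Minkowski--Weyl theorem, any bounded polyhedron is the convex hull of its finite set of extreme points. This proves the theorem: the $d$-neighborhood is a polytope described by a finite number of corner points in $\triangle^K$.

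If one wishes to be explicit about the count, the $K(K-1)$ candidates arise by considering, for each ordered pair $(i,j)$ with $i\ne j$, the operation of moving mass $\epsilon_{ij}=\min(p_i,d)$ from coordinate $i$ to coordinate $j$. I would expect the main obstacle to lie not in establishing the polytope structure (which is immediate from the argument above) but in verifying exactly which of these $K(K-1)$ candidates actually carry $K-1$ linearly independent active constraints; this depends on whether a swap saturates the TV budget ($\epsilon_{ij}=d$) or hits the simplex boundary ($\epsilon_{ij}=p_i$), and on the interaction with facets of $\triangle^K$. Since the theorem statement only requires finiteness, the polyhedral argument suffices; the explicit enumeration can be stated as a corollary.
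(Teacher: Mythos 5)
Your proof is correct, and it takes a genuinely different route from the paper. You work with the H-representation: the identity $\operatorname{TV}(\vp,\vq)=\max_{S\subseteq[K]}\sum_{k\in S}(p_k-q_k)$ (valid because $\sum_k(p_k-q_k)=0$) turns the TV ball into $2^K$ affine half-spaces, and intersecting with the simplex gives a bounded polyhedron, so Minkowski--Weyl immediately yields finitely many extreme points, which is exactly what the theorem statement asks. The paper instead argues via the V-representation: it works on the affine hyperplane $\bar{\triangle}^K$ containing the simplex, exhibits the explicit candidate vertices $\vp^{i,j}=\vp+\vh^{i,j}$ (with $+d$ in coordinate $i$, $-d$ in coordinate $j$), proves the two inclusions $C\subseteq N_d(\vp)$ and $N_d(\vp)\subseteq C$ directly (the second by a constructive, somewhat sketched decomposition of $\vq-\vp$ into the $\vh^{i,j}$ directions), and finally intersects with $\triangle^K$, giving a clipping rule $\bar{\vp}^{i,j}=\vp+\eta\,\vh^{i,j}$ for corners that leave the simplex. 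Your argument is shorter and more airtight for the bare finiteness claim, but it is non-constructive: it does not deliver the analytic corner points that the paper actually needs downstream, since the experiments feed the vertices of these TV-neighborhood credal sets into BPS as the constraints of the linear program. Your closing caveat is well taken --- whether each of the $K(K-1)$ candidates is a genuine vertex of the clipped set depends on which constraints are active --- and indeed the paper's final clipping step is itself only a heuristic adjustment rather than a full vertex enumeration of $C\cap\triangle^K$, so on that point your more cautious phrasing is arguably the more accurate one.
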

\begin{proof}
Let $\bar{\triangle}^K \supset \triangle^K$ denote the set of all $\vp = (p_1, \ldots, p_K) \in \mathbb{R}^K$ such that $p_1 + \ldots + p_K = 1$ (i.e., also including negative entries). The set of all $\vq \in \bar{\triangle}^K $ whose total variation distance from $\vp$ is bounded by $d$, i.e., 
$$
N_d(\vp) := \left\{ \vq \in \bar{\triangle}^K \, \vert \, \max_{A \subseteq [K]} \left| 
\sum_{k \in A}  p_k - \sum_{k \in A} q_k \right| \leq d  \right\} \, ,
$$
is given by the convex polytope $C$ with corner points $\vp^{i,j} = \vp + \vh^{i,j}$, $1 \leq i \neq j \leq K$, where the $i^{th}$ entry in $\vh^{i,j}$ is $+d$, the $j^{th}$ entry is $-d$, and all other entries are 0. For the direction $C \subset N_d(\vp)$, note that any $\vp^{i,j}$ is obviously in $N_d(\vp)$. Moreover, for 
$\vq = \sum_{i,j} \alpha_{i,j} \, \vp^{i,j}$, where $\alpha_{i,j} \geq 0$ and $\sum_{i,j} \alpha_{i,j} = 1$, and any $A \subseteq [K]$, 
\begin{align*}
   \sum_{k \in A}  p_k - \sum_{k \in A} q_k & =  \sum_{k \in A}  p_k - \sum_{k \in A} \sum_{i,j} \alpha_{i,j} \, p^{i,j}_k \\
   & = \sum_{k \in A} \sum_{i,j} \alpha_{i,j} \, (p_k - p^{i,j}_k ) \\
   & = \sum_{i,j} \alpha_{i,j} \sum_{k \in A} \, (p_k - p^{i,j}_k ) \\
   & \leq \max_{i,j} \sum_{k \in A} \, (p_k - p^{i,j}_k ) \\
   & \leq d.
\end{align*}
To show the direction $N_d(\vp) \subset C$, consider any $\vq \in N_d(\vp)$ such that $\vq \neq \vp$. Let $A \subseteq [K]$ be a (smallest) subset that determines the total variation distance between $\vq$ and $\vp$. Then either $q_i > p_i$ for all $i \in A$, or $q_i < p_i$ for all $i \in A$. Consider the first case (without loss of generality) and let $(\cdot)$ be a permutation of $[K]$ such that $q_{(1)} - p_{(1)} \geq q_{(2)} - p_{(2)} \geq \ldots \geq q_{(K)} - p_{(K)}$. Then $A = \{ (1), \ldots , (J) \}$ for some $1 \leq J < K$ and
$$
\sum_{k \in [J]} q_{(k)} - p_{(k)} = d' \leq d \, .
$$
We can then write $q_{(k)} = \sum_{i,j} \alpha_{i,j} \, p^{i,j}_{(k)}$ for suitably chosen $\alpha_{i,j} \geq 0$ with $\sum_{i,j} \alpha_{i,j} = 1$, which means that $\vq \in C$. The $\alpha_{i,j}$ can be specified in a constructive way by processing the entries in $\vq$ one by one, starting with the $k$ for which the difference $|q_{(k)} - p_{(k)}|$ is smallest and proceeding to the larger ones. 

Now, the $d$-neighborhood of $\vp$ that we are looking for is given by the intersection $C \cap \Delta^K$, i.e., by the intersection of two convex polytopes. Thus, it is itself again a convex polytope. 
\end{proof}
\section{Toy Examples Illustrating Epistemic Adaptivity}\label{sec:appendix:toy}
\begin{figure}
\begin{subfigure}[c]{0.6\textwidth}
    \centering
    \includegraphics[width=\columnwidth]{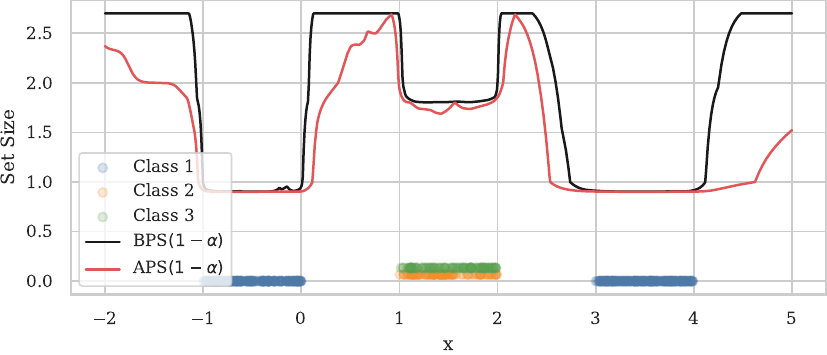} 
    \caption{Example 1}
    \label{fig:toy:BPSvsAPS:ex1}
\end{subfigure}
\hfill
\begin{subfigure}[c]{0.29\textwidth}
    \centering
    \includegraphics[width=\columnwidth]{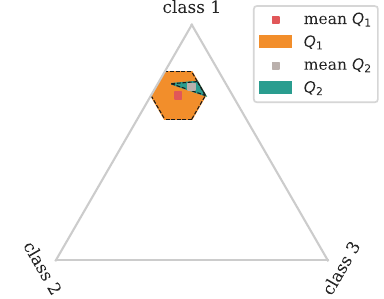} 
    \caption{Example 2}
    \label{fig:toy:BPSvsAPS:ex2}
\end{subfigure}
    \caption{Adaptivity of BPS versus APS to epistemic uncertainty. For both examples, we set $1-\alpha = 0.9$.}
    \label{fig:toy:BPSvsAPS}
\end{figure}
\paragraph{Example 1.} Consider the following toy example of a three-class classification problem, where $\vx$ is a one-dimensional input. The data-generating mechanism is defined as follows:
\begin{itemize}
    \item If $\vx_i \in [-1,0] \cup [3,4]$, then $y_i$ belongs to class 1.
    \item If $\vx_i \in [1,2]$, then $y_i$ belongs to class 2 or 3 with equal probability, i.e., $\Pr(y_i=2 \mid \vx_i)=\Pr(y_i=3 \mid \vx_i)=0.5$.
\end{itemize}
In other regions of the input space, namely $[-2,-1]$, $[0,1]$, $[2,3]$, and $[4,5]$, no training data are observed. An ensemble of neural networks is trained on this data and treated as a credal set predictor. For inputs $\vx_i \in [-2,5]$, prediction sets are constructed using BPS$(1-\alpha)$.
For comparison, we also aggregate the ensemble by taking the mean of its predictive distributions, treat it as a single probabilistic classifier, and construct prediction sets using APS$(1-\alpha)$.
As shown in \cref{fig:toy:BPSvsAPS:ex1}, in regions where sufficient training data are available, both set predictors behave similarly. However, in regions with no observed data, BPS reflects the increased epistemic uncertainty through larger prediction sets, whereas APS does not consistently account for this uncertainty.
\paragraph{Example 2.}
Consider two credal sets as illustrated in \cref{fig:toy:BPSvsAPS:ex2}, where $\gQ_2 \subset \gQ_1$. As a consequence of \cref{prop:EUadaptive} (epistemic adaptivity), the prediction set constructed by BPS for $\gQ_1$ must be at least as large as the one constructed for $\gQ_2$, since $\gQ_1$ represents higher epistemic uncertainty.
In this example, we also demonstrate that ignoring the credal set and instead working with the averaged predictor combined with APS does not satisfy this adaptivity property. Concretely, for $\gQ_1$, the Bernoulli inclusion vector obtained by APS (applied to the mean predictor) is $[1,1,0]$, while BPS yields $[1, 0.91, 0.58]$. For $\gQ_2$, APS produces $[1,1,0.23]$, whereas BPS results in $[1, 0.57, 0.71]$.
The expected set size (i.e., the sum of inclusion probabilities) for BPS decreases from $2.49$ to $2.28$ when moving from $\gQ_1$ to $\gQ_2$, reflecting the reduction in epistemic uncertainty. In contrast, for APS, the expected set size increases from $2$ to $2.23$, violating epistemic adaptivity.

\section{Experiments Details}\label{sec:appendix:exp-details}
\subsection{Models}\label{subsec:appendix:models} 
For the experiments on real-world datasets, we consider two credal set predictors: Credal Relative Likelihood (CreRL) \citep{lohr2025credal} and Credal Ensembling (CreEns) \citep{nguyen2025credal}. Both methods are based on an ensemble of neural networks. In all experiments, the number of ensemble members is fixed to $m=20$.
The CreRL framework considers a parameter $\gamma$ (referred to as $\alpha$ in the original paper; to avoid confusion with our error rate $\alpha$, we denote it by $\gamma$). It constructs the credal set from the $m$ ensemble members, where the relative likelihood of the $i^\text{th}$ member is limited to 
$\gamma + (i-1) \cdot \frac{1-\gamma}{m-1},  \forall i \in [m]$ by an early stopping strategy. 
The CreEns framework builds on standard ensemble training with $m$ members. It first considers the mean predictor as a representative and computes the Euclidean distance of each of the $m$ members to this representative. For a given $\gamma$, it then removes a $\gamma$ fraction of the ensemble members with the largest Euclidean distance from the representative.
All training details follow \cite{lohr2025credal} and their 
\href{https://github.com/timoverse/credal-prediction-relative-likelihood}{GitHub} repository\footnote{https://github.com/timoverse/credal-prediction-relative-likelihood}. In \cref{fig:real:credal-set-coverage}, we plot the coverage of the credal sets, defined as the convex hull of the $m$ predictors constructed by these methods, across three datasets for various values of $\gamma$. Note that the smaller the value of $\gamma$, the larger the resulting credal sets, hence the higher the coverage.

For the experiments in \cref{sec:experiments:partially-valid}, since we required a probabilistic classifier, we considered the average predictor of CreRL ($\gamma = 1.0$) ensemble as the probabilistic classifier. We choose $\gamma = 1.0$ because in this setting the coverage of the credal set is not relevant; we only rely on the average predictor as a probabilistic classifier.
For the experiments in \cref{sec:experiments:unknown}, in order to balance credal coverage and set size, we used CreEns ($\gamma = 0.4$), resulting in an ensemble of size $12$. Later, in \cref{tab:real-world:full}, when reporting experiments with APS, we use the average predictor of this ensemble as the probabilistic classifier.

\begin{figure}
\centering
\includegraphics[width=0.9\textwidth]{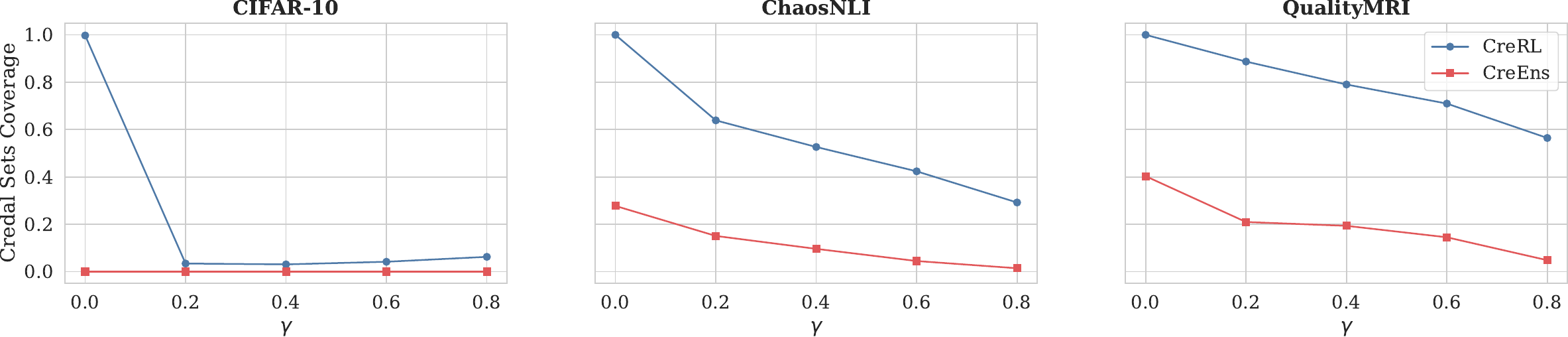}
\caption{Credal sets coverage for different credal set predictors on the three real-world datasets.}
\label{fig:real:credal-set-coverage}
\end{figure}

\subsection{Datasets}\label{subsec:appendix:datasets}
\paragraph{CIFAR-10.}
The CIFAR-10 dataset \citep{krizhevsky2009learning} is an image classification dataset with 10 classes, containing 50K training and 10K test instances. A variant of this dataset, namely CIFAR-10H \citep{peterson2019human}, provides multiple human annotations for each image in the CIFAR-10 test set. The human judgments for each image are aggregated into a categorical distribution over the classes, which serves as the oracle label distribution for that image. In all experiments with this dataset, the training data was used to train the models, while the test data was divided into $20\%$ for calibration and $80\%$ for evaluation.
As a neural network, we use the PyTorch implementation of ResNet-18, similar in spirit to the setup of \citet{lohr2025credal}.

\paragraph{ChaosNLI.}
ChaosNLI \citep{nie2020what} is an English Natural Language Inference (NLI) dataset in which the task is to classify the textual alignment between a given premise–hypothesis pair into three classes: entailment, contradiction, and neutral. The instances are selected from the development sets of SNLI \citep{bowman2015large}, MNLI \citep{williams2018broad}, and AbductiveNLI \citep{bhagavatula2019abductive}, focusing on examples with high disagreement among labels provided by human annotators. Each premise–hypothesis pair was then annotated by $100$ independent humans under strict annotation guidelines, and their responses were aggregated to form the oracle label distribution. We combine the Chaos-SNLI and Chaos-MNLI subsets, resulting in a dataset of $3113$ datapoints, of which $80\%$ is used for model training, $10\%$ for calibration, and $10\%$ for evaluation purposes. 
The premise–hypothesis pairs are embedded in the same manner as in \citet{javanmardi2024conformalized}, and a similar fully-connected neural network with $4$ hidden layers was used as the base for the ensemble.

\paragraph{QualityMRI.}
The QualityMRI dataset contains human magnetic resonance images (MRI) with varying levels of diagnostic quality \citep{obuchowicz2020qualityMRI}. Each image is evaluated independently by several radiologists, capturing variability in perceived image quality. For our purposes, the task is framed as a binary classification problem, where the aggregated radiologist annotations are interpreted as an oracle label distribution over the two classes \citep{schmarje2022benchmark}. The dataset is relatively small, comprising 310 instances in total. We use $80\%$ of the data for training, $10\%$ for calibration, and the remaining $10\%$ for evaluation. On this data, we employ the ResNet-18 architecture from the PyTorch torchvision package without pretrained weights as a base for the ensemble.

\section{Extension of Experiments with Partially Valid Credal Sets}\label{sec:appendix:partial}
In this section, we extend the results of \cref{sec:experiments:partially-valid} to the remaining two datasets, namely, ChaosNLI and QualityMRI. \cref{tab:partially-valid:full} confirms that what we observed for the case of CIFAR-10 also holds for the other two datasets. Specifically, the conditional coverage satisfaction is always greater than $1-\epsilon$.
\begin{table}[t!]
    \centering
    \caption{Performance of BPS$(1-\alpha)$ on the partially valid credal sets for different datasets. Here $1-\alpha = 0.9.$}
    \resizebox{\columnwidth}{!}{
        \begin{tabular}{l|cccc|cccc|cccc}
        \toprule
         &  \multicolumn{4}{c}{\bfseries CIFAR-10} & \multicolumn{4}{|c}{\bfseries ChaosNLI} & \multicolumn{4}{|c}{\bfseries QualityMRI} \\
         \cmidrule(r){2-5}
         \cmidrule(r){6-9}
         \cmidrule(r){10-13}
          $\epsilon$& Credal Cvg. & Cond. Sat. & Marg. Cvg. & Set Size & Credal Cvg. & Cond. Sat. & Marg. Cvg. & Set Size & Credal Cvg. & Cond. Sat. & Marg. Cvg. & Set Size \\
        \midrule
         0.10 & 0.90 $\pm$ 0.01 & 0.97 $\pm$ 0.00 & 1.00 $\pm$ 0.00 & 6.55 $\pm$ 0.23 & 0.89 $\pm$ 0.02 & 0.98 $\pm$ 0.01 & 0.92 $\pm$ 0.00 & 2.69 $\pm$ 0.00 & 0.90 $\pm$ 0.08 & 1.00 $\pm$ 0.00 & 0.90 $\pm$ 0.00 & 1.80 $\pm$ 0.00 \\
        0.20 & 0.80 $\pm$ 0.01 & 0.93 $\pm$ 0.01 & 0.99 $\pm$ 0.00 & 1.72 $\pm$ 0.43 & 0.79 $\pm$ 0.03 & 0.95 $\pm$ 0.01 & 0.93 $\pm$ 0.00 & 2.67 $\pm$ 0.00 & 0.79 $\pm$ 0.08 & 1.00 $\pm$ 0.00 & 0.90 $\pm$  0.00 & 1.80 $\pm$ 0.00 \\
        0.30 & 0.70 $\pm$ 0.01 & 0.84 $\pm$ 0.01 & 0.95 $\pm$ 0.01 & 1.18 $\pm$ 0.02 & 0.70 $\pm$ 0.04 & 0.92 $\pm$ 0.01 & 0.94 $\pm$ 0.00 & 2.63 $\pm$ 0.01 & 0.71 $\pm$ 0.10 & 1.00 $\pm$ 0.00 & 0.90 $\pm$  0.00 & 1.80 $\pm$ 0.00 \\
        \bottomrule
        \end{tabular}
        }
    \label{tab:partially-valid:full}
\end{table}

\section{Alternative Risk Functions for Calibration}\label{sec:appendix:alternative}
In \cref{sec:CaseIII}, when introducing our calibration approach, we specifically considered the risk in (\ref{eq:risk:satisfaction}), namely the indicator that the conditional coverage falls below the target level $1-\alpha$. We chose this risk in order to provide a PAC-style conditional coverage guarantee for our set predictor. However, CRC can be applied with arbitrary risk functions, leading to different types of guarantees.
Recall that $\Dcal^\mathrm{first} = \{(\vx_i, \vp_i)\}_{i=1}^n$ denotes the first-order calibration data, exchangeable with the future test point, and $\gD_\dagger = \Dcal^\mathrm{first} \cup \{(\vx_{n+1}, \vp_{n+1})\}$. Likewise, $\Dcal^\mathrm{zero} = \{(\vx_i, y_i)\}_{i=1}^n$ denotes the zero-order calibration data, with $\gD_+ = \Dcal^\mathrm{zero} \cup \{(\vx_{n+1}, y_{n+1})\}$.
When first-order calibration data are available, one may consider the conditional miscoverage risk
\[
\gL(\vx_i, \lambda) = 1 - \vb_i^\lambda \cdot \vp_i.
\]
Alternatively, when only zero-order data are accessible, one may use the marginal miscoverage risk
\[
\gL(\vx_i, \lambda) = 1 - \vb_i^\lambda \cdot \ve_{y_i},
\]
where $\ve_{y_i}$ denotes the one-hot encoding of $y_i$. This latter risk can be viewed as a soft extension of classical conformal prediction with miscoverage risk $\mathds{1}(y_i \notin \gC(\vx_i))$ to the randomized set setting. In \cref{tab:risk-guarantees}, we provide an overview of the different risk functions, the data they require, and the guarantees they deliver. In \cref{tab:real-world:full}, we compare the performance of calibrated BPS under these risk formulations.
\begin{table}[t]
\renewcommand{\arraystretch}{1.2}
    \centering
    \caption{Different risk functions, their calibration data requirements, and their guarantees.}
    \begin{tabular}{cccc}
    \toprule
    Approach Name & Risk $\gL(\vx_i, \lambda)$ & Calibration Data & Guarantee \\ 
    \midrule
       CondSatFirst& $\mathds{1}\left[\vb_i^\lambda \cdot \vp_i < 1-\alpha\right]$ & first-order data  & $\Pr_{\gD_\dagger}\left[\vb_{n+1}^{\lambda^\star} \cdot \vp_{n+1} \ge 1-\alpha\right] \geq 1-\beta$ \\ 
       CondSatZero& $\mathds{1}\left[\vb_i^\lambda \cdot \ve_{y_i} < 1-\alpha\right]$ & zero-order data  & $\Pr_{\gD_+}\left[\vb_{n+1}^{\lambda^\star} \cdot \ve_{y_{n+1}} \ge 1-\alpha\right] \geq 1-\beta$ \\ 
       MeanCondFirst& $1- \vb_i^\lambda \cdot \vp_{i}$ & first-order data  & $\mathbb{E}_{\gD_\dagger}\left[\vb_{n+1}^{\lambda^\star} \cdot \vp_{n+1} \right] \geq 1-\beta $ \\ 
       MargZero& $1- \vb_i^\lambda \cdot \ve_{y_i}$ & zero-order data  & $\mathbb{E}_{\gD_+}\left[\vb_{n+1}^{\lambda^\star} \cdot \ve_{y_{n+1}} \right] \geq 1-\beta $ \\ 
       \bottomrule
    \end{tabular}
    
    \label{tab:risk-guarantees}
\end{table}

\section{Alternative Set Predictors}\label{sec:appendix:alternative:set}
\begin{figure}
    \centering
    \includegraphics[width=0.6\columnwidth]{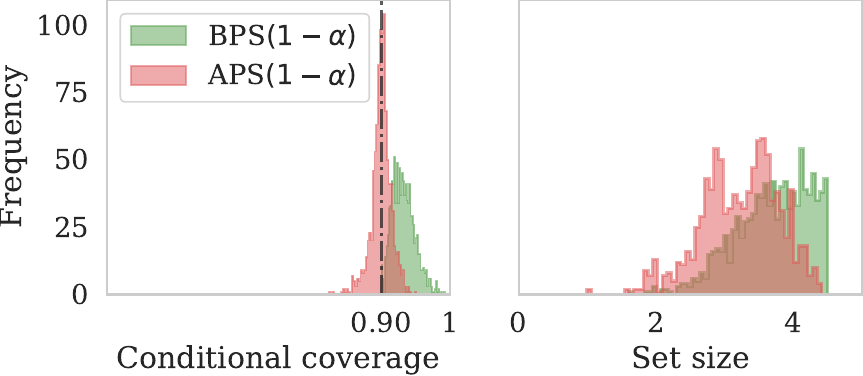}
    \caption{Histograms of the conditional coverage and (expected) set size of BPS$(1-\alpha)$ and APS$(1-\alpha)$ under valid credal sets. Here, $1-\alpha = 0.9$.}
    \label{fig:valid:BPSvsAPS}
\end{figure}
While in our experiments we used BPS as a natural base set predictor built on top of a credal set predictor, one may ask whether alternative set predictors could be employed. To this end, recall that in the fully valid and partially valid credal set settings (\cref{Sec:CaseI} and \cref{Sec:CaseII}), we do not apply any conformal prediction framework. In these cases, BPS alone delivers the desired (exact or PAC-style) conditional coverage guarantee with respect to the true label distribution. Hence, it is not immediately clear what would be gained by replacing BPS with another set predictor.

However, for the case of credal sets with unknown validity (\cref{sec:CaseIII}), as we apply CRC to get the PAC-style conditional coverage guarantee, any other set predictor can be used together with the risk defined in (\ref{eq:risk:satisfaction}) and the first-order calibration data to deliver the guarantee. For example, one could aggregate the credal set into a single representative distribution, for instance, by taking the mean over the $m$ distributions defining the credal set, and treat it as a probabilistic classifier. On top of this averaged predictor, methods such as APS could then be applied. While this is a valid construction, it effectively collapses the credal set into a single distribution and therefore no longer explicitly represents epistemic uncertainty. As a consequence, even if such a method satisfies a similar coverage guarantee after calibration, the resulting prediction sets do not adapt to epistemic uncertainty in the same principled way as BPS derived directly from the credal set.

To further investigate this point, we compare the performance of BPS and APS in all three cases. For Cases I and II (valid and partially valid credal sets), we simply replace BPS$(1-\alpha)$ with APS$(1-\alpha)$, i.e., APS with the same nominal coverage level. \cref{fig:valid:BPSvsAPS} compares the conditional coverage and set sizes for the case of valid credal sets. It can be seen that, unlike BPS$(1-\alpha)$, APS$(1-\alpha)$ ignores the information contained in the credal sets and violates conditional coverage in a substantial fraction of cases, while producing more efficient (smaller) prediction sets. \cref{tab:partially-valid:full:APS} reports the corresponding results for partially valid credal sets. Note that, in this setting, APS is applied to the mean of the ensemble, which is independent of $\epsilon$; hence, unlike BPS, its results do not vary with $\epsilon$, and we report a single row per dataset.
A similar pattern emerges: compared to BPS$(1-\alpha)$ as reported in \cref{tab:partially-valid:full}, APS$(1-\alpha)$ again violates the desired conditional coverage guarantee.

For the case of credal sets with unknown validity, the situation differs. As discussed in \cref{sec:CaseIII}, our guarantee can be leveraged for any set predictor, including APS. In particular, calibrated APS can also attain the PAC-style conditional coverage guarantee. Nevertheless, BPS continues to explicitly exploit epistemic uncertainty through the credal set representation, whereas APS operates on a single aggregated distribution and ignores this structure. In \cref{sec:appendix:toy}, we illustrate this difference through toy examples. The full comparison of APS- and BPS-based predictors for this setup 
under different calibration strategies is given in \cref{tab:real-world:full}.

\paragraph{Remark (APS-based set predictors).}
We emphasize that although we refer to these methods as APS-based in our 
experiments, they should not be confused with the original APS approach from 
the literature \citep{romano2020classification}. The constructions considered 
here do not appear in prior work; rather, they correspond to our calibration 
strategies applied to APS as a base set predictor. Only APS$(1-\alpha)$ 
(APS without calibration) and APS with MargZero calibration (the standard 
conformal calibration) correspond to the original APS method; the remaining 
combinations are new.

\begin{table}[t!]
    \centering
    \caption{Performance of APS$(1-\alpha)$ on the partially valid credal sets for different datasets. Since APS operates on the mean predictor, which does not depend
    on $\epsilon$, the results are identical for all values of $\epsilon$.
    Here $1-\alpha = 0.9.$}
        \begin{tabular}{l|ccc}
        \toprule
        Dataset & Cond. Sat. & Marg. Cvg. & Set Size \\
        \midrule
        {\bfseries CIFAR-10}   & 0.53 $\pm$ 0.00 & 0.90 $\pm$ 0.00 & 1.05 $\pm$ 0.00 \\
        {\bfseries ChaosNLI}   & 0.56 $\pm$ 0.03 & 0.89 $\pm$ 0.01 & 1.89 $\pm$ 0.02 \\
        {\bfseries QualityMRI} & 0.11 $\pm$ 0.04 & 0.52 $\pm$ 0.04 & 1.14 $\pm$ 0.03 \\
        \bottomrule
        \end{tabular}
    \label{tab:partially-valid:full:APS}
\end{table}

\begin{table*}[t!]
    \centering
		\caption{Performance comparison of different set predictors using different calibration approaches on credal sets with unknown validity across three real-world datasets. $1-\alpha = 0.9$ and $1-\beta = 0.9$.}
        \label{tab:real-world:full}
  \resizebox{\textwidth}{!}{
\begin{tabular}{ll|ccc|ccc|ccc}
\toprule
& &  \multicolumn{3}{c}{\textbf{CIFAR-10}} & \multicolumn{3}{|c}{\textbf{ChaosNLI}} & \multicolumn{3}{|c}{\textbf{QualityMRI}}\\
\cmidrule(r){3-5}
\cmidrule(r){6-8}
\cmidrule(r){9-11}
base& Calibration & Cond. Sat. & Marg. Cvg. & Set Size & Cond. Sat. & Marg. Cvg. & Set Size & Cond. Sat. & Marg. Cvg. & Set Size \\
\midrule
\multirow{5}{*}{\rotatebox{90}{BPS}}&BPS $(1-\alpha)$ & 0.52 $\pm$ 0.00 & 0.90 $\pm$ 0.00 & 1.09 $\pm$ 0.00 & 0.61 $\pm$ 0.01 & 0.89 $\pm$ 0.01 & 2.00 $\pm$ 0.02 & 0.33 $\pm$ 0.05 & 0.81 $\pm$ 0.02 & 1.36 $\pm$ 0.04 \\
&CondSatFirst & 0.90 $\pm$ 0.01 & 0.98 $\pm$ 0.00 & 1.45 $\pm$ 0.05 & 0.91 $\pm$ 0.02 & 0.95 $\pm$ 0.01 & 2.40 $\pm$ 0.04 & 0.94 $\pm$ 0.04 & 0.99 $\pm$ 0.01 & 1.93 $\pm$ 0.02 \\
&CondSatZero & 0.52 $\pm$ 0.00 & 0.90 $\pm$ 0.00 & 1.09 $\pm$ 0.00 & 0.90 $\pm$ 0.01 & 0.95 $\pm$ 0.01 & 2.35 $\pm$ 0.04 & 0.77 $\pm$ 0.10 & 0.96 $\pm$ 0.03 & 1.81 $\pm$ 0.07 \\
&MeanCondFirst & 0.75 $\pm$ 0.02 & 0.93 $\pm$ 0.00 & 1.17 $\pm$ 0.01 & 0.70 $\pm$ 0.03 & 0.90 $\pm$ 0.01 & 2.08 $\pm$ 0.03 & 0.83 $\pm$ 0.05 & 0.97 $\pm$ 0.01 & 1.85 $\pm$ 0.04 \\
&MargZero & 0.34 $\pm$ 0.17 & 0.90 $\pm$ 0.00 & 1.09 $\pm$ 0.01 & 0.71 $\pm$ 0.05 & 0.91 $\pm$ 0.02 & 2.09 $\pm$ 0.05 & 0.66 $\pm$ 0.09 & 0.93 $\pm$ 0.03 & 1.71 $\pm$ 0.07 \\
\midrule
\multirow{5}{*}{\rotatebox{90}{APS}}&APS $(1-\alpha)$ & 0.48 $\pm$ 0.00 & 0.89 $\pm$ 0.00 & 0.99 $\pm$ 0.00 & 0.44 $\pm$ 0.02 & 0.84 $\pm$ 0.01 & 1.77 $\pm$ 0.01 & 0.17 $\pm$ 0.06 & 0.76 $\pm$ 0.03 & 1.24 $\pm$ 0.04 \\
&CondSatFirst & 0.90 $\pm$ 0.01 & 0.99 $\pm$ 0.00 & 1.40 $\pm$ 0.07 & 0.91 $\pm$ 0.01 & 0.95 $\pm$ 0.01 & 2.39 $\pm$ 0.03 & 0.94 $\pm$ 0.03 & 0.99 $\pm$ 0.01 & 1.93 $\pm$ 0.02 \\
&CondSatZero & 0.48 $\pm$ 0.00 & 0.89 $\pm$ 0.00 & 0.99 $\pm$ 0.00 & 0.89 $\pm$ 0.02 & 0.94 $\pm$ 0.01 & 2.31 $\pm$ 0.04 & 0.73 $\pm$ 0.12 & 0.96 $\pm$ 0.03 & 1.80 $\pm$ 0.07 \\
&MeanCondFirst & 0.80 $\pm$ 0.01 & 0.93 $\pm$ 0.00 & 1.08 $\pm$ 0.01 & 0.74 $\pm$ 0.03 & 0.90 $\pm$ 0.01 & 2.03 $\pm$ 0.03 & 0.83 $\pm$ 0.05 & 0.97 $\pm$ 0.02 & 1.85 $\pm$ 0.03 \\
&MargZero & 0.56 $\pm$ 0.09 & 0.90 $\pm$ 0.00 & 1.01 $\pm$ 0.01 & 0.75 $\pm$ 0.03 & 0.91 $\pm$ 0.01 & 2.06 $\pm$ 0.05 & 0.63 $\pm$ 0.11 & 0.93 $\pm$ 0.03 & 1.70 $\pm$ 0.09 \\
\bottomrule
\end{tabular}

    }
\end{table*}

\paragraph{Remark (comparison of two conformalization strategies).}
Conformalized credal set predictors \citep{javanmardi2024conformalized} (as explained in \cref{sec:experiments:partially-valid}) take any probabilistic classifier together with a set of first-order calibration data and turn it into a credal set predictor that guarantees the true label distribution lies in the credal set with a user-specified probability, hence yielding a partially valid credal set predictor. In \cref{sec:CaseIII}, we also showed that, given the risk defined in (\ref{eq:risk:satisfaction}) and first-order calibration data, our method can instead transform a probabilistic classifier directly into a conformal set predictor with a PAC-style conditional coverage guarantee.

A comparison between these two approaches can be seen in our results: \cref{tab:partially-valid:full} (BPS$(1-\alpha)$ with $\epsilon = 0.1$) can be compared with \cref{tab:real-world:full} (APS with CondSatFirst calibration). It can be observed that the second approach, direct conformalization for conditional coverage, yields more efficient (smaller) prediction sets, while the first approach, first constructing conformalized credal sets and then applying optimal BPS with nominal $1-\alpha$, results in larger sets. This is mainly because prediction sets constructed from invalid credal sets may still satisfy conditional coverage, so enforcing credal validity first can be unnecessarily conservative.

\section{Experiments on Out-of-Distribution Data}\label{sec:appendix:ood}
A domain where epistemic uncertainty arises is when dealing with out-of-distribution (OOD) data for prediction, where data points come from a distribution that deviates from the original training distribution. In such cases, incorporating epistemic uncertainty into set prediction is even more important. To that end, we consider an experimental setup using CIFAR-10-C \citep{hendrycks2019robustness}, a corrupted version of CIFAR-10 that is often used to evaluate OOD robustness. Here, as the corruption type, we choose Gaussian noise with severity levels from 1 to 5.

Similar to the in-distribution setting, we apply calibration (if any) solely on in-distribution data. During testing, we use the credal set predictor to provide predictions for OOD input data. \cref{tab:ood:cifar10c} compares the performance of different set predictors under three severity levels: 1, 3, and 5.
As the severity increases, the level of corruption also increases, and thus it is expected that the credal sets become larger. This is reflected in the set size of the BPS-based approaches, as they are adaptive to epistemic uncertainty. Furthermore, as expected, BPS outperforms APS in the majority of cases, both with and without calibration, in terms of conditional coverage satisfaction, by incorporating epistemic uncertainty while producing larger sets.
\begin{table*}[t!]
    \centering
    \caption{Performance comparison of different set predictors using various calibration approaches on credal sets with unknown validity on CIFAR-10-C (Gaussian noise corruption across different severity levels). $1-\alpha = 0.9$ and $1-\beta = 0.9$.}
    \label{tab:ood:cifar10c}
    \resizebox{\textwidth}{!}{
\begin{tabular}{ll|ccc|ccc|ccc}
    \toprule
& & \multicolumn{3}{c}{\textbf{Severity 1}} & \multicolumn{3}{c}{\textbf{Severity 3}} & \multicolumn{3}{c}{\textbf{Severity 5}} \\
    \cmidrule(r){3-5} \cmidrule(r){6-8} \cmidrule(r){9-11}
base & Calibration & Cond. Sat. & Marg. Cvg. & Set Size & Cond. Sat. & Marg. Cvg. & Set Size & Cond. Sat. & Marg. Cvg. & Set Size \\
    \midrule
    \multirow{5}{*}{\rotatebox{90}{BPS}} & BPS $(1-\alpha)$ & 0.61 $\pm$ 0.00 & 0.87 $\pm$ 0.00 & 1.63 $\pm$ 0.01 & 0.43 $\pm$ 0.00 & 0.60 $\pm$ 0.00 & 2.08 $\pm$ 0.01 & 0.32 $\pm$ 0.00 & 0.48 $\pm$ 0.00 & 2.11 $\pm$ 0.01 \\
    & CondSatFirst & 0.90 $\pm$ 0.01 & 0.96 $\pm$ 0.00 & 2.59 $\pm$ 0.13 & 0.68 $\pm$ 0.02 & 0.77 $\pm$ 0.02 & 3.52 $\pm$ 0.21 & 0.57 $\pm$ 0.02 & 0.67 $\pm$ 0.02 & 3.57 $\pm$ 0.20 \\
    & CondSatZero & 0.61 $\pm$ 0.00 & 0.87 $\pm$ 0.00 & 1.62 $\pm$ 0.01 & 0.43 $\pm$ 0.00 & 0.60 $\pm$ 0.00 & 2.08 $\pm$ 0.01 & 0.32 $\pm$ 0.00 & 0.48 $\pm$ 0.00 & 2.10 $\pm$ 0.01 \\
    & MeanCondFirst & 0.78 $\pm$ 0.01 & 0.90 $\pm$ 0.00 & 1.81 $\pm$ 0.02 & 0.52 $\pm$ 0.01 & 0.65 $\pm$ 0.01 & 2.36 $\pm$ 0.03 & 0.40 $\pm$ 0.01 & 0.53 $\pm$ 0.01 & 2.39 $\pm$ 0.03 \\
    & MargZero & 0.51 $\pm$ 0.09 & 0.87 $\pm$ 0.00 & 1.62 $\pm$ 0.01 & 0.40 $\pm$ 0.03 & 0.60 $\pm$ 0.01 & 2.08 $\pm$ 0.02 & 0.31 $\pm$ 0.02 & 0.48 $\pm$ 0.00 & 2.10 $\pm$ 0.02 \\
    \midrule
    \multirow{5}{*}{\rotatebox{90}{APS}} & APS $(1-\alpha)$ & 0.54 $\pm$ 0.00 & 0.82 $\pm$ 0.00 & 1.26 $\pm$ 0.00 & 0.35 $\pm$ 0.00 & 0.49 $\pm$ 0.00 & 1.48 $\pm$ 0.01 & 0.25 $\pm$ 0.00 & 0.36 $\pm$ 0.00 & 1.45 $\pm$ 0.01 \\
    & CondSatFirst & 0.90 $\pm$ 0.01 & 0.96 $\pm$ 0.01 & 2.46 $\pm$ 0.21 & 0.68 $\pm$ 0.03 & 0.75 $\pm$ 0.03 & 3.32 $\pm$ 0.33 & 0.57 $\pm$ 0.04 & 0.64 $\pm$ 0.03 & 3.30 $\pm$ 0.32 \\
    & CondSatZero & 0.54 $\pm$ 0.00 & 0.82 $\pm$ 0.00 & 1.26 $\pm$ 0.00 & 0.35 $\pm$ 0.00 & 0.49 $\pm$ 0.00 & 1.48 $\pm$ 0.01 & 0.25 $\pm$ 0.00 & 0.36 $\pm$ 0.00 & 1.45 $\pm$ 0.01 \\
    & MeanCondFirst & 0.78 $\pm$ 0.01 & 0.88 $\pm$ 0.01 & 1.48 $\pm$ 0.02 & 0.47 $\pm$ 0.01 & 0.57 $\pm$ 0.01 & 1.81 $\pm$ 0.04 & 0.35 $\pm$ 0.01 & 0.43 $\pm$ 0.01 & 1.79 $\pm$ 0.04 \\
    & MargZero & 0.60 $\pm$ 0.06 & 0.84 $\pm$ 0.01 & 1.31 $\pm$ 0.02 & 0.39 $\pm$ 0.02 & 0.51 $\pm$ 0.01 & 1.56 $\pm$ 0.02 & 0.28 $\pm$ 0.01 & 0.38 $\pm$ 0.01 & 1.53 $\pm$ 0.02 \\
    \bottomrule
\end{tabular}
    }
\end{table*}
\end{document}